\author[$\dag\ddag$\footnote{Work done while visiting at Courant Institute.}]{Stefano Sarao Mannelli}
\author[$\ddag$]{Eric Vanden-Eijnden}
\author[$\dag$]{Lenka Zdeborov\'a}
\affil[$\dag$]{Universit\'e Paris-Saclay, CNRS, CEA, Institut de physique th\'eorique, 91191, Gif-sur-Yvette, France.}
\affil[$\ddag$]{Courant Institute, New York University, 251 Mercer Street, New York, New York 10012, USA}
\date{}
\def\vb{\boldsymbol{v}}
\def\wb{\boldsymbol{w}}
\def\Wb{\boldsymbol{W}}
\def\xb{\boldsymbol{x}}
\def\zb{\boldsymbol{z}}
\def\Qm{A}
\def\Bm{B}
\def\Xm{X}
\def\<{\langle} \def\>{\rangle}
\def\RR{\mathbb{R}} \def\NN{\mathbb{N}} 
\def\EE{\mathbb{E}} \def\PP{\mathbb{P}} 
\DeclareMathOperator{\tr}{tr}
\DeclareMathOperator*{\argmin}{arg\,min}
\DeclareMathOperator*{\sign}{sign}
\newtheorem{theorem}{Theorem}[section]
\newtheorem{lemma}[theorem]{Lemma}
\newtheorem{proposition}[theorem]{Proposition}
\title{Optimization and Generalization of Shallow Neural Networks with Quadratic Activation Functions}
\begin{document}

\maketitle

\begin{abstract}
    We study the dynamics of optimization and the generalization properties of one-hidden layer neural networks with quadratic activation function in the over-parametrized regime where the layer width $m$ is larger than the input dimension $d$. 
    We consider a teacher-student scenario where the teacher has the same structure as the student with a hidden layer of smaller width $m^*\le m$. 
    We describe how the empirical loss landscape is affected by the number $n$ of data samples and the width $m^*$ of the teacher network. In particular we determine how the probability that there be no spurious minima on the empirical loss depends on $n$, $d$, and $m^*$, thereby establishing conditions under which the neural network can in principle recover the teacher. 
    We  also show  that under the same conditions gradient descent dynamics on the empirical loss converges and leads to small generalization error, i.e. it enables recovery in practice.
    Finally we characterize the time-convergence rate of gradient descent  in the limit of a large number of samples.
    These results are confirmed by numerical experiments.
\end{abstract}


%

\section{Introduction}

Neural networks are a key component of the  machine learning toolbox. Still the reasons behind their success remain  mysterious from a theoretical prospective. While sufficiently large neural networks can in principle represent a large class of functions,  we do not yet understand under what conditions their parameters can be adjusted in an algorithmically tractable way for that purpose.  For example, under worst case assumptions, some functions cannot be tractably learned with neural networks \cite{blum1989training,abbe2018provable}. We also know that there exist settings with adversarial initializations where neural networks fail in generalization to new samples, while the same setting from random initial conditions succeeds \cite{liu2019bad}. And yet, in many practical settings, neural networks are trained successfully even with simple local algorithm such as gradient descent (GD) or stochastic gradient descent (SGD). 

The problem of learning the parameters of a neural network is two-fold. First, we want that their training  on a set of data via minimization of a suitable loss function succeed in finding a set of parameters for which the value of the loss is close to its global minimum. Second, and more importantly, we want that such a set of parameters also generalizes well to unseen data. 
Theoretical guarantees have been obtained in many settings by a geometrical analysis of the loss showing that only global minima are present, see e.g. \cite{ge2016matrix,du2017gradient}. In particular it has been shown that network over-parametrization can be beneficial and lead to landscapes without spurious minima in which GD or SGD converge \cite{livni2014computational,lee2016gradient,soltanolkotabi2018theoretical,venturi2019spurious,sarao2019passed}. 
However, over-parametrized neural networks successfully optimized on a training set do not necessarily generalize  well -- for example neural networks can achieve zero errors in training without learning any rule \cite{zhang2016understanding}. It is therefore important to understand when zero training loss implies good generalization.  

It is know empirically that deep neural networks can learn functions that can be represented with a much smaller (sometimes even shallow) neural network \cite{ba2014deep,hinton2015distilling,frankle2018lottery}, but that learning the smaller network without first learning the larger one is computationally harder \cite{livni2014computational}. Our work provides a theoretical justification for this empirical observation by providing an explicit and rigorously analyzable case where this happens. 

\paragraph*{Main contributions:} In this work we investigate the issues of training and generalization in the context of a teacher-student set-up. We assume that both the teacher and the student are one-hidden layer neural network with quadratic activation function and quadratic loss. We focus on the over-parametrized or over-realizable case where the hidden layer of the teacher $m^*$ is smaller than that of the student $m$. 
We assume that the hidden layer of the student $m$ is larger than the dimensionality $d$, $m>d$, in that case:
\begin{itemize}
    \item We show that the value of the empirical loss is zero on all of its minimizers, but that the set of minimizers does not reduce to the singleton containing only the teacher network in general.
    \item We derive a critical value $\alpha_c = m^*+1$  of the number of samples $n$ per dimension $d$ above which the set of minimizers of the empirical loss has a positive probability to reduce to the singleton containing only the teacher network in the limit $n,d\to\infty$ with $n/d\ge \alpha_c$---i.e. we derive a sample complexity threshold above which the minimizer can have good generalization properties. The formula is proven for a teacher with a single hidden unit $m^*=1$ (a.k.a. phase retrieval). 
    \item We study gradient descent flow on the empirical loss starting from random initialization and show that it converges to a network  that can achieve perfect generalization above this sample complexity threshold $\alpha_c$. 
    \item We quantify the nonasymptotic convergence rate of gradient descent in the limit of large number of samples and show that the loss is bounded from above at all times by $C_1/(1+C_2t)$ for some constants $C_1,C_2>0$. We also evaluate the asymptotic convergence rate and identify two different regimes according to the input dimension and the number of hidden units, showing that in one case the loss converges as $O(t^{-2})$ as $t\to\infty$ while in the second case it converges exponentially.
    \item We show how the string method can be used to probe the empirical loss landscape and find minimum energy paths on this landscape connecting the initial weights of the student to those of the teacher, possibly going through flat portion or above energy barrier. This allows one to probe features of this landscape not accessible by standard GD.
\end{itemize}

In Sec.~\ref{sec:problem} we formally define the problem and derive some key properties that we use in the rest of the paper. In Sec.~\ref{sec:geom_considerations} we analyze the training and the generalization losses from the geometrical prospective, and derive the formula for the sample complexity threshold. In Sec.~\ref{sec:dynamics} we show that gradient descent flow can find good minima for datasets above this sample complexity threshold, and we characterize its convergence rate. In Sec.~\ref{sec:simulations} we present our results using the string method to probe the loss landscape. Finally in the appendix we give the proofs and some additional numerical results.




\paragraph*{Related works:} One-hidden layer neural networks with quadratic activation functions in the over-parametrized regime were considered in a range of previous works \cite{venturi2019spurious,du2018power,soltanolkotabi2018theoretical,haeffele2015global,nguyen2017loss}. Notably it was shown that all local minima are global when the number of hidden units $m$ is larger than the dimension $d$ and that gradient descent finds the global optimum \cite{du2018power,soltanolkotabi2018theoretical,haeffele2015global}, and also when the number of hidden units $m > \sqrt{2n}$ with $n$ being the number of samples \cite{du2018power,nguyen2017loss}. Most of  these results were established for arbitrary training data of input/output pairs, but consequently these works did not establish condition under which the minimizers reached by the gradient descent have good generalization properties. Indeed, it is intuitive that over-parametrization renders the optimization problem simpler, but it is rather non-intuitive that it does not destroy good generalization properties. 
In~\cite{du2018power}, under the assumption that the input data is  Gaussian i.i.d., a $O(1/\sqrt{n})$ generalization rate was established. However the generalization properties of neural networks with number of samples comparable to the dimensionality is mostly left open. 

Much tighter (Bayes-optimal) generalization properties of neural networks were established for data generated by the teacher-student model, for the generalized linear models in \cite{barbier2019optimal}, and for one hidden layer much smaller than the dimension in \cite{aubin2018committee}. However, these results were only shown to be achievable with approximate message passing algorithms and the performance of gradient-descent algorithm was not analyzed. 
Also studying over-parametrization with analogous tightness of generalization results is an open problem and has been achieved only for the one-pass stochastic gradient descent \cite{goldt2019dynamics}. 

A notable special case of our setting is when the teacher has only one hidden unit, in which case the teacher network is equivalent to the phase retrieval problem with random sensing matrix \cite{fienup1982phase}. For this case the performance of message passing algorithms is well understood and requires a number of samples linearly proportional to the dimension, $n > 1.13 d$ in the high-dimensional regime for perfect generalization \cite{barbier2019optimal}.  For randomly initialized gradient descent the best existing rigorous result for the phase retrieval requires $d {\rm poly} (\log{d})$ number of samples \cite{chen2019gradient}. The performance of the gradient-descent in the phase retrieval problem is studied in detail in a concurrent work \cite{anonymous20}, showing numerically that without overparametrization randomly initialized gradient descent needs at least $n \approx 7 d$ samples to find perfect generalization. In the present work we show that overparametrized neural networks are able to solve the phase retrieval problem with $ n > 2 d$ samples in the high-dimensional limit. This improves upon \cite{chen2019gradient} and falls close to the performance of the approximate message passing algorithm that is conjectured optimal among polynomial ones \cite{barbier2019optimal}. 
But most interesting is the comparison between our results for the phase retrieval obtained by overparametrized neural networks $\alpha_c=2$, and the results from \cite{anonymous20} who show that without overparametrized considerably larger $\alpha$ is needed for gradient descent to succeed to learn the same function. This comparison provides a theoretical justification for how overparametrization helps gradient descent to find good generalization properties with fewer samples. We stress that the same property would not apply to the message passing algorithms. We could speculate that more of the properties of overparametrization observed in deep learning are limited to the gradient-descent-based algorithms and would not hold for other algorithmic classes.  

Closely related to our work is Ref.~\cite{gamarnik2019stationary} in which the authors consider the same teacher-student problem as we do. The main difference is that they only consider teachers that have more hidden units than the input dimension, $m^*\ge d$, while we consider arbitrary $m^*$. As we show below the regime where  $m^* < d$ turns out to be interesting as it affects nontrivially the critical number of samples $n_c$ needed for recovery and leads to a more complex scenario in which $n_c$ depends also on $m^*$---in particular taking $m^*<d$ allows for recovery below the threshold $d(d+1)/2$, which is one of our main results. 


\section{Problem formulation}\label{sec:problem}

Consider a teacher-student scenario where a teacher network generates the dataset, and a student network aims at learning the function of the teacher. The teacher has weights $\wb^*_i\in\RR^d$, with $i=1,\ldots,m^*$. We will keep the teacher weights generic in most of the paper and  will specify them when needed, in particular for the simulations where we consider two specific teachers: one with $\{\wb^*_i\}_{i\le m^*}$ i.i.d. Gaussian with covariance  identity, and one with $\{\wb^*_i\}_{i\le m^*}$ orthonormal. 

The student's weights are $\wb_j\in \RR^d$, with $j=1,\dots,m$ and $m\ge d$. Given an input $\xb\in\RR^d$, teacher's and student's outputs are respectively
\begin{equation}
  \label{eq:3}
  f_*(\xb) = \frac1{m^*}\sum_{i=1}^{m^*} |\xb\cdot \wb_i^*|^2, \qquad\text{and}\qquad
  f(\xb) = \frac1m \sum_{j=1}^m |\xb\cdot \wb_j|^2,
\end{equation}
where we fixed the second layer of weights to $1/m^*$ and $1/m$, respectively. The teacher produces $n$ outputs $y_k=f_*(\xb_k) $ from random i.i.d. Gaussian samples $\xb_k\sim\nu = \mathcal{N}(0,{I_d})$, $k=1,\dots,n$. Given this dataset, we define the empirical loss
\begin{equation}
  \label{eq:loss_empirical}
  \begin{aligned}
    L_n(\wb_1,\ldots,\wb_m) 
    & = \frac14\EE_{\nu_n} \Big| \frac1{m^*} \sum_{i=1}^{m^*}|\xb\cdot \wb_i^*|^2- \frac1m
    \sum_{j=1}^m |\xb\cdot \wb_j|^2\Big| ^2
  \end{aligned}
\end{equation}
where $\EE_{\nu_n}$ denotes expectation
with respect to the empirical measure $\nu_n = n^{-1}\sum_{k=1}^n \delta_{\xb_k}$. As usual, the population loss is obtained by taking the expectation of~\eqref{eq:loss_empirical} with respect to $\nu$.

The student minimizes the empirical loss~\eqref{eq:loss_empirical} using gradient descent, $\dot \wb_i(t) =-m \partial_{\wb_i} L_n$. Explicitly
\begin{equation}
  \label{eq:w_evolution_empirical}
    \dot \wb_i(t) = \EE_{\nu_n} \big[\tr\left(\Xm(A^* -A(t)) \right) \Xm \wb_i(t)\big].
\end{equation}
where we introduced the following $d\times d$ matrices
\begin{equation}
    \label{eq:Aetc}
    A(t)= \frac1m \sum_{i=1}^m \wb_i(t) \wb_i^T(t), \quad A^* = \frac1{m^*} \sum_{i=1}^{m^*} \wb_i^* (\wb_i^*)^T, \quad \Xm = \xb\xb^T.
\end{equation}
We can now see that a closed equation for $A(t)$ can be derived from \eqref{eq:w_evolution_empirical}, and this new equation reduces the effective number of weights from $O(d n)$ to $O(d^2)$  without affecting neither the dynamics nor the other properties of the teacher and student since $f_*(\xb) = \tr(XA^*)$ and $f(\xb) = \tr(XA)$:
\begin{lemma}
  \label{lem:1}
  The GD flow~\eqref{eq:w_evolution_empirical} of the weights $\{\wb_i\}_{i\le m}$ on the empirical loss induces the following evolution equation for $A(t)$:
  \begin{equation}
    \label{eq:QGD}
    \dot{A} = -A \nabla E_n(A) - \nabla E_n(A) A = \EE_{\nu_n} [\tr\left(\Xm(A^*-A) \right) \left(A \Xm+\Xm A\right)],
  \end{equation}
  where $\nabla$ denotes gradient with respect to $A$ and $E_n(A)$ is twice the empirical loss~\eqref{eq:loss_empirical} rewritten in terms of $A$:
  \begin{equation}
    \label{eq:EPdef}
    E_n(A) = \frac12 \EE_{\nu_n} \left|\tr\left(\Xm(A-A^*) \right) \right|^2 .
  \end{equation}
\end{lemma}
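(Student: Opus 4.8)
The plan is to differentiate $A(t) = \frac1m\sum_{i=1}^m \wb_i(t)\wb_i^T(t)$ directly, substitute the weight dynamics~\eqref{eq:w_evolution_empirical}, and recognize the result as the stated expression. First I would apply the product rule to get
\begin{equation}
  \dot A(t) = \frac1m\sum_{i=1}^m \left(\dot\wb_i\,\wb_i^T + \wb_i\,\dot\wb_i^T\right).
\end{equation}
Then I would insert $\dot\wb_i = \EE_{\nu_n}\big[\tr(\Xm(A^*-A))\,\Xm\wb_i\big]$ from~\eqref{eq:w_evolution_empirical}. The key observation is that the scalar factor $\tr(\Xm(A^*-A))$ does not depend on the index $i$, so it pulls out of the sum over $i$; what remains inside is $\frac1m\sum_i \big(\Xm\wb_i\wb_i^T + \wb_i\wb_i^T\Xm\big) = \Xm A + A\Xm$. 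Since the expectation $\EE_{\nu_n}$ is over the data $\Xm=\xb\xb^T$ and commutes with the finite sum over $i$, this yields
\begin{equation}
  \dot A = \EE_{\nu_n}\big[\tr\!\left(\Xm(A^*-A)\right)\left(A\Xm + \Xm A\right)\big],
\end{equation}
which is the right-hand side claimed in~\eqref{eq:QGD}.

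Next I would verify the variational form $\dot A = -A\nabla E_n(A) - \nabla E_n(A)A$. Starting from $E_n(A) = \frac12\EE_{\nu_n}\big|\tr(\Xm(A-A^*))\big|^2$, differentiating with respect to the symmetric matrix $A$ gives $\nabla E_n(A) = \EE_{\nu_n}\big[\tr(\Xm(A-A^*))\,\Xm\big]$, using that $\partial_A \tr(\Xm A) = \Xm$ (with $\Xm$ symmetric). Substituting and using $\tr(\Xm(A^*-A)) = -\tr(\Xm(A-A^*))$ shows that $-A\nabla E_n(A) - \nabla E_n(A)A$ equals the expression just derived, completing the identification. Finally I would note the claim that $E_n(A)$ is twice the empirical loss: since $f_*(\xb) = \tr(\Xm A^*)$ and $f(\xb) = \tr(\Xm A)$, we have $\tr(\Xm(A-A^*)) = f(\xb) - f_*(\xb)$, so $E_n(A) = \frac12\EE_{\nu_n}|f(\xb)-f_*(\xb)|^2 = 2L_n$, consistent with~\eqref{eq:loss_empirical}.

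This is an essentially computational lemma and I do not anticipate a serious obstacle; the only point requiring mild care is the matrix differentiation, where one must respect the symmetry constraint on $A$ (so that the gradient is itself symmetric) and correctly account for the factor arising from the two occurrences of $A$ inside the trace. One should also double-check the overall scaling: the factor $m$ in the gradient-descent convention $\dot\wb_i = -m\,\partial_{\wb_i}L_n$ is precisely what cancels the $1/m$ in the definition of $A$ so that no spurious factor of $m$ survives in~\eqref{eq:QGD}. Beyond that, the derivation is a direct substitution and regrouping of terms.
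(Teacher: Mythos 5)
Your proposal is correct and follows essentially the same route as the paper's proof: differentiate $A(t)$ via the product rule, substitute the weight dynamics and use $\Xm^T=\Xm$ to obtain the expectation form, then compute $\nabla E_n(A)=\EE_{\nu_n}[\tr(\Xm(A-A^*))\,\Xm]$ to identify the variational form $\dot A=-A\nabla E_n-\nabla E_n A$. The extra checks you flag (symmetry of the gradient and the cancellation of the factor $m$) are handled implicitly in the paper and pose no difficulty.
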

%
It is also possible to write the equivalent of this lemma for the population loss:
\begin{lemma}
  \label{lem:1pop}
  The GD flow of the weights $\{\wb_i\}_{i\le m}$ on the population loss reads
  \begin{equation}
  \label{eq:w_evolution_pop}
    \dot \wb_i(t) = \tr(A^* -A(t)) \wb_i(t) + 2 (A^*-A(t)) \wb_i(t).
\end{equation}
and it induces the following evolution equation for $A(t)$:
  \begin{equation}
    \label{eq:QGDpop}
    \begin{aligned}
      \dot{A} & = -A \nabla E(A) -
      \nabla E(A) A
      = 2\left[(\tr(A^*-A) ) A + (A^*-A) A+ A
        (A^*-A)\right].
    \end{aligned}
  \end{equation}
  where  $E(A)$ is twice the population loss
  written in terms of $A$:
  \begin{equation}
    \label{eq:4}
    E(A) = \tr \left((A-A^*)^2\right) + \frac12 \left(\tr (A-A^*)\right)^2 .
  \end{equation}
\end{lemma}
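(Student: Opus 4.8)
The plan is to follow the same route as for Lemma~\ref{lem:1}: first extract the ``structural'' form of the flow, which does not use the law of $\xb$, and then carry out the expectation, which here \emph{can} be done in closed form because $\xb$ is Gaussian. For the structural part, write the population loss as $L=\tfrac14\EE_\nu|\tr(X(A-A^*))|^2=\tfrac12 E(A)$, with $E(A)=\tfrac12\EE_\nu|\tr(X(A-A^*))|^2$ and matrix gradient $\nabla E(A)=\EE_\nu[\tr(X(A-A^*))\,X]$, which is symmetric since it averages the symmetric matrix $X=\xb\xb^T$. Differentiating $L$ through $A=\tfrac1m\sum_i\wb_i\wb_i^T$ by the chain rule gives $\partial_{\wb_i}L=\tfrac1m\,\nabla E(A)\,\wb_i$, so the rescaled gradient flow $\dot\wb_i=-m\,\partial_{\wb_i}L$ becomes $\dot\wb_i=-\nabla E(A)\,\wb_i$; differentiating $A=\tfrac1m\sum_i\wb_i\wb_i^T$ along this flow then yields $\dot A=-A\,\nabla E(A)-\nabla E(A)\,A$. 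This is the first equality in \eqref{eq:QGDpop} and the exact analogue of the identity in Lemma~\ref{lem:1} with $\nu_n$ replaced by $\nu$, and up to here nothing about the distribution of $\xb$ has been used.

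It then remains to evaluate the Gaussian expectations. The only two identities needed are that, for $\xb\sim\mathcal N(0,I_d)$ and any symmetric $d\times d$ matrix $M$,
\begin{equation}
  \EE_\nu\big[(\xb^T M\xb)\,\xb\xb^T\big]=(\tr M)\,I_d+2M,\qquad
  \EE_\nu\big[(\xb^T M\xb)^2\big]=(\tr M)^2+2\tr(M^2),
\end{equation}
both immediate from Wick's rule $\EE[x_a x_b x_c x_d]=\delta_{ab}\delta_{cd}+\delta_{ac}\delta_{bd}+\delta_{ad}\delta_{bc}$. Taking $M=A-A^*$ in the first identity gives $\nabla E(A)=(\tr(A-A^*))\,I_d+2(A-A^*)$; substituting this into $\dot\wb_i=-\nabla E(A)\wb_i$ produces \eqref{eq:w_evolution_pop}, and into $\dot A=-A\nabla E(A)-\nabla E(A)A$ produces, after collecting terms, the explicit right-hand side of \eqref{eq:QGDpop}. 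Taking $M=A-A^*$ in the second identity gives $E(A)=\tr((A-A^*)^2)+\tfrac12(\tr(A-A^*))^2$, which is \eqref{eq:4} and in particular confirms that $E(A)$ equals twice the population loss.

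Since every step is an elementary computation, I do not anticipate a genuine obstacle. The only points that require care are the bookkeeping of the prefactors $1/m$ and $1/m^*$ when differentiating, and the consistent use of the symmetry $M=M^T$ in the Wick contractions: without it the two moment identities would each pick up an extra $M^T$ term, and the clean closed forms in \eqref{eq:w_evolution_pop}, \eqref{eq:QGDpop} and \eqref{eq:4} would not emerge.
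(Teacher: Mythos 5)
Your proposal is correct and follows essentially the same route as the paper: the first equality in \eqref{eq:QGDpop} is the structural identity already established at the empirical level (Lemma~\ref{lem:1}), and the explicit right-hand sides of \eqref{eq:w_evolution_pop}, \eqref{eq:QGDpop} and \eqref{eq:4} are obtained by taking the expectation over $\nu$ via Wick's theorem, using the symmetry of $A$ and $A^*$, exactly as in the paper's Appendix proof. Your version merely spells out the intermediate Gaussian moment identities and the chain-rule bookkeeping that the paper leaves implicit.
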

Expression~\eqref{eq:4} for the population loss was already given in~\cite{gamarnik2019stationary}. Lemmas~\ref{lem:1} and \ref{lem:1pop} are proven in Appendices~\ref{si:L1} and \ref{si:L2}, respectively. In Appendix~\ref{si:prox} we also show that \eqref{eq:QGD} and \eqref{eq:QGDpop} are the continuous limit of proximal schemes on $E_n$ and $E$, respectively, relative to a specific Bergman divergence. 

\section{Geometrical Considerations and Sample Complexity Threshold}\label{sec:geom_considerations}

The empirical loss~$E_n(A)$ is quadratic, hence convex, with minimum zero. In addition  $A=A^*$ is a minimizer since $E_n(A^*)=0$. The main question we want to address next is when is this minimizer unique. 

Since the trace is a scalar product in the vector space of $d\times d$ matrices in which symmetric matrices form a $d(d+1)/2$ dimensional subspace, the empirical loss~$E_n(A)$ will be strictly convex in this subspace \textit{iff} we span it using $d(d+1)/2$ linearly independent $\Xm_k = \xb_k \xb_k^T$~\cite{gamarnik2019stationary}. Yet, if we restrict considerations to matrices $A$ that are also positive semidefinite, we need less data to guarantee that $A=A^*$ is the unique minimizer of $E_n(A)$, at least in some probabilistic sense: 
\begin{theorem}[Single unit teacher]\label{th:strict_convexity_loss}
    Consider a teacher with $m^*=1$ and a student with $m\ge d$ hidden units respectively, so that $A^*$ has rank 1 and $A$ has full rank. Given a data set $\{\xb_k\}_{k=1}^n$ with each $\xb_k \in \RR^d$ drawn independently from a standard Gaussian, denote by $\mathcal{M}_{n,d}$ the set of minimizer of the empirical loss constructed with $\{\xb_k\}_{k=1}^n$ over symmetric positive semidefinite matrices $A$, i.e.
    \begin{equation}
        \label{eq:minemp}
        \mathcal{M}_{n,d} = \left\{ A=A^T , \ \textit{positive semidefinite such that} \ E_n(A) = 0\right\}.
    \end{equation}
    Set $n = \lfloor\alpha d\rfloor$ for $\alpha\ge1$ and let $d\to\infty$. Then
    \begin{equation}
        \label{eq:lim1}
        \lim_{d\to\infty} \PP \left(\mathcal{M}_{\lfloor\alpha d\rfloor,d} \not = \{A^*\} \right) = 1 \qquad \text{if} \ \alpha \in [0,2]
    \end{equation}
    whereas
    \begin{equation}
        \label{eq:lim2}
        \lim_{d\to\infty} \PP \left(\mathcal{M}_{\lfloor\alpha d\rfloor,d} = \{A^*\}\right) >0 \qquad \text{if} \ \alpha \in (2,\infty).
    \end{equation}
\end{theorem}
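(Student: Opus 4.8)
The plan is to reformulate the condition "$A^* \in \mathcal{M}_{n,d}$ is the unique PSD minimizer" as a statement about whether a rank-one matrix is the unique PSD solution to a system of linear equations, i.e. a tangent-cone / uniqueness question in semidefinite feasibility, and then to analyze the relevant random linear system via the phase-retrieval literature. Concretely, write $A = A^* + \Delta$. Since $E_n$ is a nonnegative quadratic with $E_n(A^*)=0$, we have $E_n(A)=0$ iff $\tr(X_k \Delta)=0$ for all $k=1,\dots,n$, i.e. $\Delta$ lies in the subspace $V_n = \{ \Delta = \Delta^T : \langle \xb_k\xb_k^T, \Delta\rangle = 0 \ \forall k\}$ of the $\tfrac{d(d+1)}{2}$-dimensional space of symmetric matrices. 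Uniqueness of $A^*$ among PSD matrices then reads: the only $\Delta \in V_n$ with $A^* + \Delta \succeq 0$ is $\Delta = 0$. Because $A^* = \wb^*(\wb^*)^T$ has rank one, the feasible directions $\Delta$ with $A^*+t\Delta\succeq 0$ for small $t>0$ are exactly those whose restriction to $(\wb^*)^\perp$ is PSD; so the question becomes whether $V_n$ contains a nonzero $\Delta$ that is PSD on the $(d-1)$-dimensional complement of $\wb^*$ — and in fact, by a standard rescaling/cone argument one reduces to asking whether $V_n$ contains a \emph{nonzero PSD} matrix at all (after quotienting out the rank-one direction), which is precisely the dual of the phase-retrieval injectivity condition.

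The \textbf{lower-bound direction} \eqref{eq:lim1} (failure for $\alpha\le 2$): I would exhibit, with probability tending to one, a nonzero PSD $\Delta$ with $\langle \xb_k\xb_k^T,\Delta\rangle=0$ for all $k$. The natural candidate is a low-rank perturbation: look for $\Delta = \ub\ub^T$ (rank one) or $\Delta = \ub\ub^T + \vb\vb^T$ orthogonal in a suitable sense to $\wb^*$; the constraints $\xb_k^T\ub\ub^T\xb_k = 0$ say each $\xb_k$ is orthogonal to $\ub$, which for $n<d$ is a nonempty linear condition but for $n\ge d$ forces $\ub=0$, so rank-one will not suffice past $\alpha=1$ and one must use genuinely indefinite-looking combinations made PSD by the rank-one freedom. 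The cleaner route is a dimension count combined with a result from phase retrieval: the map $\Delta \mapsto (\langle \xb_k\xb_k^T,\Delta\rangle)_{k\le n}$ restricted to rank-$\le 2$ symmetric matrices; known results (e.g. the $4d-o(d)$ threshold for injectivity on rank-$\le 2$, and more relevantly the PSD-cone version) give that for $n \le 2d$ there exist nonzero PSD matrices in the kernel with probability $\to 1$. I would cite/adapt the sharp count that the cone of PSD matrices, which has "dimension" behaving like $2d$ near a rank-one point, generically intersects a codimension-$n$ subspace nontrivially iff $n \lesssim 2d$ — this is exactly the $\alpha_c = 2$ phenomenon and matches the statement $m^*=1 \Rightarrow \alpha_c = m^*+1 = 2$.

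The \textbf{upper-bound direction} \eqref{eq:lim2} (success with positive probability for $\alpha>2$): here I would invoke the tight threshold results for phase retrieval from PSD/low-rank measurements. The statement that $A^*$ is the unique PSD matrix with the given $\langle X_k,\cdot\rangle$ values is precisely exact recovery of a rank-one PSD matrix from $n$ quadratic measurements $y_k = |\xb_k\cdot\wb^*|^2$, subject only to the PSD constraint (no nuclear-norm objective needed because PSD feasibility alone pins it down above threshold). The work of Barbier et al.\ \cite{barbier2019optimal} and the companion analysis establish that for Gaussian $\xb_k$ and $n/d > 2$ the PSD feasibility problem has a unique solution with probability bounded away from zero as $d\to\infty$; I would reduce the theorem to that result (or reprove the needed direction via a second-moment / escape-through-the-mesh estimate: bound the probability that the descent cone of the PSD constraint at $A^*$ meets $\ker$ of the measurement map, using that the statistical dimension of that descent cone is $\approx 2d$). \textbf{The main obstacle} is making the "statistical dimension $\approx 2d$" heuristic rigorous at the sharp constant $2$ — the naive tangent-cone dimension count gives the right answer only after one correctly handles that $A^*$ sits on the boundary of the PSD cone (its descent cone is a half-space-like object of dimension $d + \binom{d-1+1}{2}$ minus the measurement codimension), and controlling the random intersection at the exact threshold, rather than up to constants, is the delicate part; I would lean on the Gaussian-width/kinematic-formula machinery (Amelunxen–Lotz–McCoy–Tropp style) together with the explicit spectral structure of the quadratic measurements to nail the constant.
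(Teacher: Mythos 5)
Your reformulation in the first paragraph is correct and matches the paper's starting point: $E_n(A)=0$ iff $\tr[\xb_k\xb_k^T(A-A^*)]=0$ for all $k$, and since $A\succeq0$ and $A^*$ has rank one, the perturbation $\Delta=A-A^*$ can have at most one negative eigenvalue, so the whole question is whether the measurement kernel contains such a nonzero $\Delta$. But from that point on there is a genuine gap: both directions of your argument are outsourced to results that either do not exist at the sharp constant or concern a different quantity. Barbier et al.\ establishes Bayes-optimal/AMP thresholds (e.g.\ $n\approx1.13d$) for recovering the signal, not uniqueness of the PSD matrix satisfying the linear constraints, so it cannot be ``reduced to''; and the PhaseLift-feasibility literature (recovery from PSD constraints alone) gives $n=O(d)$ or $O(d\log d)$ with unspecified constants, not the threshold $\alpha_c=2$ with failure w.p.\ $\to1$ below and success with positive probability above. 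Moreover, your fallback heuristic --- that the descent cone of the PSD cone at a rank-one point has ``statistical dimension $\approx 2d$'' so Gaussian-width/kinematic (ALMT) machinery would nail the constant --- is quantitatively wrong: that cone contains every symmetric matrix whose compression to $(\wb^*)^\perp$ is PSD, so its statistical dimension is of order $d^2$ (you even write its dimension as $d+\binom{d}{2}$), and a generic-subspace kinematic argument would therefore place the threshold at order $d^2$, not $2d$. The $2d$ scaling is a consequence of the special structure of the measurements $\xb_k\xb_k^T$ (they all lie in the PSD cone), which generic-intersection formulas do not see --- exactly the step you flag as the ``main obstacle'' without resolving it.

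The paper closes this gap by a different, elementary mechanism that your proposal misses entirely. In the only nontrivial case ($\Delta$ with exactly one negative eigenvalue, eigenvector $\vb_d$), the constraints $\sum_{i<d}\lambda_i|\xb_k\cdot\vb_i|^2=|\lambda_d||\xb_k\cdot\vb_d|^2$ force the sign-adjusted data vectors $\hat\xb_k=\xb_k\sign(\xb_k\cdot\vb_d)$ to lie on the boundary of an elliptical cone; since that cone is convex, this requires every $\hat\xb_k$ to be an extremal ray of the polyhedral cone $C_{n,d}$ they generate. The classical Cover--Efron formula for the expected number $N_{n,d}$ of extremal rays of a random cone in general position then gives $d^{-1}\EE N_{\lfloor\alpha d\rfloor,d}\to\min(\alpha,2)$: for $\alpha\le2$ this forces all $n$ vectors to be extremal with probability $\to1$ (so spurious minimizers exist, giving \eqref{eq:lim1}), while for $\alpha>2$ some vector fails to be extremal with probability bounded away from zero (so no spurious minimizer can exist, giving \eqref{eq:lim2}). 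This combinatorial-geometry input is what delivers the sharp constant $2$, and it has no counterpart in your proposal; without it, neither \eqref{eq:lim1} nor \eqref{eq:lim2} is actually proved.
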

In words, this theorem says that it exists a threshold value $\alpha_c=2$ such that  for any $n>n_c=\lfloor\alpha d\rfloor$ there is a finite probability that the empirical loss landscape trivializes and all spurious minima disappear in the limit as $d\to\infty$. For $n\le n_c$ however, this is not the case and spurious minima exist with probability 1 in the limit. Therefore, the chance to learn $A^*$ by minimizing the empirical loss from a random initial condition is zero if $\alpha \in [0,2)$ but it becomes positive if $\alpha>2$. The proof of Theorem~\ref{th:strict_convexity_loss} is presented in Appendix~\ref{si:proofs}. This proof shows that we can account for the constraint that $A$ be positive definite by making a connection with the classic problem of the number of extremal rays of proper convex polyhedral cones generated by a set of random vectors in general position. Interestingly, this proof also gives a criterion on the data set $\{\xb_k\}_{k=1}^n$ that guarantees that the only minimizer of the empirical loss be $A^*$: it suffices to check that the proper convex polyhedral cones constructed with the data vectors have a number of extremal rays that is less than~$n$.

\paragraph*{Heuristic extension for arbitrary $m^*$.} The result of Theorem~\ref{th:strict_convexity_loss} can also be understood via a heuristic algebraic argument that has the advantage that it applies to arbitrary $m^*$.  The idea, elaborated upon in Appendix~\ref{si:geometrical_consideration_heuristic}, is to count the number of constraints needed to ensure that the only minimum of the empirical loss is $A=A^*$, taking into account that (i) $A$ has full rank and $A^*$ has rank $m^*$ and (ii)  both $A$ and $A^*$ are positive semidefinite and symmetric,  so that  the number of negative eigenvalues of $A-A^*$ can at most be $m^*$. If we use a block representation of $A-A^*$  in which we diagonalize the block that contains the direction associated with the eigenvectors of $A-A^*$ with nonnegative eigenvalues, and simply count the number of nonzero entries in the resulting matrix (accounting for its symmetry), for 
$m^*<d$ we arrive at
\begin{equation}
    \label{eq:alphac}
    n_c = d(m^*+1) - \tfrac12m^*(m^*+1)
\end{equation}
while for $m^*\ge d$ we recover the result $n=d(d+1)/2$  already found in \cite{venturi2019spurious,gamarnik2019stationary}. Setting $n_c= \alpha_c d$ and sending $d \to \infty$, this gives the sample complexity threshold
\begin{equation}
    \label{eq:alpha_c}
    \alpha_c = (m^*+1) 
\end{equation}
which, for $m^*=1$, agrees with the result in Theorem~\ref{th:strict_convexity_loss}. The sample complexity threshold is confirmed in Fig.~\ref{fig:dyn_phases} via simulations using gradient descent (GD) on the empirical loss---we explain this figure in Sec.~\ref{sec:dynamics} after establishing that the GD dynamics converges.

\begin{figure}[!th]
	\centering
	\includegraphics[width=.5\linewidth]{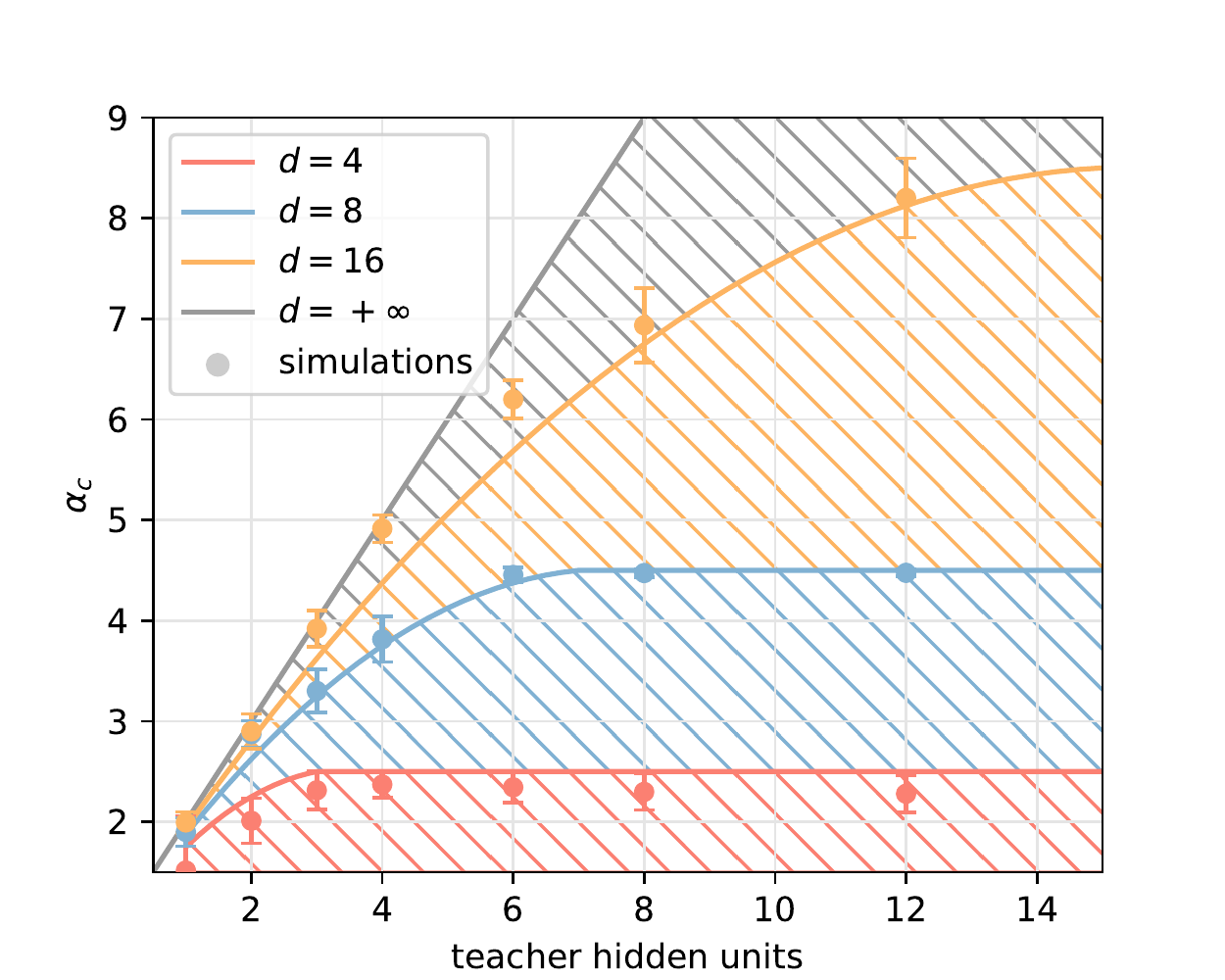}
	\caption{Dynamical phases of the student performance with a teacher having a number of hidden units given on the $x$-axis. The solid lines show the theoretical prediction in~\eqref{eq:alphac} for the sample complexity threshold and the points are obtained by extrapolation from  simulations with GD. In the simulations we consider a teacher with i.i.d. Gaussian weights and we report other cases in the Appendix.}
	\label{fig:dyn_phases}
\end{figure}

\section{Convergence of Gradient Descent on the Empirical Loss}\label{sec:dynamics}

Let us now analyze the performance of gradient descent over the empirical loss. As shown in Appendix~\ref{si:con}, we can  prove that:
\begin{theorem}
  \label{th:con}
  Let $\{\wb_i(t)\}_{i=1}^m$ be the solution to~\eqref{eq:w_evolution_empirical} for the initial data $\{\wb_i(0)\}_{i=1}^m$. Assume that $m\ge d$ and each $\wb_i(0)$ is drawn independently from a distribution that is absolutely continuous with respect to the Lebesgue measure on $\RR^d$.   Then 
  \begin{equation}
    \label{eq:convemp1}
      A = \frac1m \sum_{i=1}^m \wb_i(t) \wb_i^T(t) \to A_\infty =\frac1m \sum_{i=1}^m \wb^\infty_i (\wb_i^\infty)^T\quad \text{as\ \ $t\to\infty$}
  \end{equation}
  and $A_\infty$ is a global minimizer of the empirical loss, i.e.
  \begin{equation}
    \label{eq:convemp2}
      E_n(A_\infty) =  2 L_n (\wb_1^\infty,\ldots,\wb_n^\infty)= 0.
  \end{equation}
\end{theorem}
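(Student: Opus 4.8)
The plan is to exploit the closed-form gradient flow \eqref{eq:QGD} for $A(t)$ together with the convexity of $E_n$. First I would observe that since the weights evolve by \eqref{eq:w_evolution_empirical}, the matrix $A(t)$ satisfies the autonomous equation \eqref{eq:QGD}, namely $\dot A = -A\nabla E_n(A) - \nabla E_n(A)A$. Because $E_n$ is a (convex) quadratic in $A$, $\nabla E_n(A)$ is affine and the flow is well-posed for all $t\ge 0$; moreover $\tfrac{d}{dt}E_n(A(t)) = \langle \nabla E_n(A), \dot A\rangle = -2\,\tr\!\big(\nabla E_n(A)\, A\, \nabla E_n(A)\big)$. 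Here I would use that $A(t)$ stays symmetric positive semidefinite along the flow (it is a convex combination of rank-one matrices $\wb_i\wb_i^T$, a property preserved because \eqref{eq:w_evolution_empirical} only rescales/rotates each $\wb_i$). Hence $\tr(\nabla E_n A \nabla E_n)\ge 0$ and $E_n(A(t))$ is nonincreasing and bounded below by $0$; in particular $A(t)$ stays in a sublevel set of $E_n$, which—since $E_n$ is a coercive quadratic on the symmetric subspace modulo its kernel—together with the a priori bound gives that the trajectory remains bounded. From this LaSalle-type argument one extracts convergence of $E_n(A(t))$ to some limit and of $\dot A(t)\to 0$ along a subsequence, so any accumulation point $A_\infty$ satisfies $A_\infty\nabla E_n(A_\infty) + \nabla E_n(A_\infty)A_\infty = 0$.

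The second step is to upgrade this stationarity condition to $E_n(A_\infty)=0$. Writing $G=\nabla E_n(A_\infty)$, the condition $A_\infty G + G A_\infty=0$ together with $A_\infty\succeq 0$ symmetric and $G$ symmetric forces $A_\infty G = 0$ (standard: in an eigenbasis of $A_\infty$, the $(j,k)$ entry of $A_\infty G + G A_\infty$ is $(\lambda_j+\lambda_k)G_{jk}$, so $G_{jk}=0$ whenever $\lambda_j+\lambda_k>0$, hence $G$ is supported on the kernel of $A_\infty$ and $A_\infty G=0$). Then $\langle \nabla E_n(A_\infty), A_\infty - A^*\rangle = \tr(G A_\infty) - \tr(G A^*) = -\tr(G A^*) \le 0$ because $G\succeq 0$ restricted to $\ker A_\infty$ is not guaranteed—so instead I would use convexity directly: $E_n(A^*) \ge E_n(A_\infty) + \langle \nabla E_n(A_\infty), A^*-A_\infty\rangle = E_n(A_\infty) + \tr(G A^*)$. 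Since $E_n(A^*)=0$ and $E_n(A_\infty)\ge 0$, this gives $\tr(G A^*)\le 0$. On the other hand $G=\nabla E_n(A_\infty)$ is itself of the form $\EE_{\nu_n}[\tr(X(A_\infty-A^*))X]$, so $\tr(G A^*) = \EE_{\nu_n}[\tr(X(A_\infty-A^*))\tr(X A^*)]$; combining with the analogous identity $\tr(G A_\infty)=\EE_{\nu_n}[\tr(X(A_\infty-A^*))\tr(X A_\infty)] = 0$ (from $A_\infty G=0$), subtraction yields $-\tr(G A^*) = \EE_{\nu_n}[\tr(X(A_\infty-A^*))^2] = 2E_n(A_\infty)\ge 0$, whence $E_n(A_\infty)\le 0$ and therefore $E_n(A_\infty)=0$. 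So every accumulation point is a global minimizer.

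The final step is to promote subsequential convergence to genuine convergence of $A(t)$ as $t\to\infty$. Once $E_n(A_\infty)=0$, on the set $\{E_n=0\}$ one has $\nabla E_n\equiv 0$, so the limit set consists of equilibria of \eqref{eq:QGD}. Since $E_n$ is a Lyapunov function that is constant ($=0$) on the whole limit set and the flow is real-analytic (polynomial) in $A$, a Łojasiewicz-gradient-inequality argument applies and gives convergence of the trajectory to a single point $A_\infty$; alternatively, one can argue more elementarily that near the affine subspace $\{A:\nabla E_n(A)=0\}$ the decay of $E_n(A(t))$ is integrable in $\sqrt{E_n}$ along the flow, forcing finite length of the trajectory. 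Finally, since $\wb_i(t)\wb_i^T(t)$ each remain bounded and $A(t)$ converges, a short argument on \eqref{eq:w_evolution_empirical}—whose right-hand side involves $\tr(X(A^*-A(t)))\to 0$—shows each $\wb_i(t)$ converges to some $\wb_i^\infty$, giving \eqref{eq:convemp1}, and \eqref{eq:convemp2} is just the restatement $E_n(A_\infty)=2L_n(\wb_1^\infty,\dots,\wb_m^\infty)=0$.

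I expect the main obstacle to be the last step: ruling out that $A(t)$ spirals or drifts along the flat zero-set of $E_n$ without converging. The convexity of $E_n$ and the fact that $\dot A$ is a "preconditioned" gradient (preconditioned by the positive semidefinite operator $B\mapsto AB+BA$, whose kernel can be nontrivial when $A$ is rank-deficient) mean that standard gradient-flow convergence theorems do not apply verbatim; one genuinely needs either the Łojasiewicz inequality for the analytic function $E_n$ along this degenerate flow, or a direct estimate exploiting that the flow preserves $A\succeq 0$ so the preconditioner degenerates only on $\ker A$, exactly where $G$ also lives—so that the effective dynamics transverse to the zero-set is still a bona fide strictly-decreasing gradient flow. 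The absolute-continuity hypothesis on $\wb_i(0)$ is presumably used (as in related work) to ensure $A(0)\succ 0$, i.e. full rank, which keeps the preconditioner nondegenerate and simplifies this analysis; I would check carefully where rank-deficiency of $A(t)$ could arise and whether it is excluded almost surely.
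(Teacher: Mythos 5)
There is a genuine gap, and it sits exactly where you suspected trouble. Your second step claims that every accumulation point of the flow \eqref{eq:QGD} is a global minimizer, but the algebra does not deliver this: from $A_\infty G+GA_\infty=0$ you correctly get $\tr(GA_\infty)=0$, and the identity $\tr\big(G(A_\infty-A^*)\big)=\EE_{\nu_n}\big|\tr(\Xm(A_\infty-A^*))\big|^2$ then gives $\tr(GA^*)=-2E_n(A_\infty)$; plugging this into the convexity inequality $0=E_n(A^*)\ge E_n(A_\infty)+\tr(GA^*)$ yields only $E_n(A_\infty)\ge 0$, which is vacuous, not $E_n(A_\infty)\le 0$. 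And no repair is possible from stationarity alone, because the claim itself is false: the flow $\dot A=-(A\nabla E_n+\nabla E_n A)$ has spurious stationary points at which $\nabla E_n(A_\infty)\neq 0$ but is supported on $\ker A_\infty$ (the simplest being $A_\infty=0$, where $E_n(0)>0$ whenever $A^*\neq 0$). So a LaSalle argument in $A$-space cannot by itself rule out convergence to a rank-deficient non-minimizing stationary point; some use of the random (absolutely continuous) initialization is indispensable, and your proposal never actually uses that hypothesis except for the speculative remark that it makes $A(0)$ full rank—which does not address the problem, since the obstruction concerns the limit, where $A(t)$ may (and, when the minimizer has rank $m^*<d$, must) degenerate.

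This is precisely where the paper takes a different route: it stays in weight space, notes that \eqref{eq:w_evolution_empirical} is a bona fide gradient flow of the quartic polynomial $L_n$, and invokes the Stable Manifold Theorem so that, for initializations drawn from a distribution absolutely continuous with respect to Lebesgue measure, the trajectory avoids the stable manifolds (of positive codimension) of all critical points that are not local minima and hence converges to a local minimizer of $L_n$. The over-parametrization $m\ge d$ is then used to show that any symmetric PSD matrix near $A_\infty$ is realizable by weights near $\{\wb_i^\infty\}$, so $A_\infty$ is a local—hence, by convexity of $E_n$, global—minimizer, giving $E_n(A_\infty)=0$. If you want to keep an $A$-space argument, you would still need an analogous saddle-avoidance statement for the degenerate ``preconditioned'' flow (or a Łojasiewicz/center-stable-manifold argument lifted from weight space), together with a justification that trajectories are bounded and converge; as written, your step 2 fails and steps 1 and 3 lean on it.
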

In a nutshell this theorem can be proven using the equivalence between the formulation using the weights with the GD flow in~\eqref{eq:w_evolution_empirical} over the loss $L_n$ in~\eqref{eq:loss_empirical} and that using $A$ with the evolution equation in~\eqref{eq:QGD} and the associated loss $E_n$ in~\eqref{eq:EPdef}. We can invoke the Stable Manifold Theorem~\cite{smale1963stable} to assert that the solution~\eqref{eq:w_evolution_empirical} must converge to a local minimum of $L_n$; as soon as $m\ge d$ and $A(0)$ has full rank, this minimum must be a minimum of $E_n$, which means that it must be the global since $E_n$ is convex. Note also that Theorem~\ref{th:con} can be generalized to time-discretized version of the GD flow using the results in Ref.~\cite{lee2016gradient}

Combined with Theorem~\ref{th:strict_convexity_loss}, Theorem~\ref{th:con} indicates that, when $m^*=1$ and $d$ is large, the probability that $A_\infty \not = A^*$ is high when $n/d \ge 2$, whereas the probability that $A_\infty = A^*$ becomes positive for $n/d>2$.  If we generalize this analysis to the case $m^*>1$ and $d$ large, we expect that GD will recover the teacher only if $n\ge n_c$ with $n_c$ given by~\eqref{eq:alphac}. 

These results are confirmed by numerical simulations in Fig.~\ref{fig:dyn_phases} where we plot $\alpha_c=n_c/d$ as a function of the number of teacher hidden units $m^*$ for different values of $d$. The four colors represent different input dimensions $d=4,8,16,\infty$. We use circles to represent the numerical extrapolation of $\alpha_c$ obtained by several runs of GD flow on different instances of the problem, using the procedure described in Appendix~\ref{si:numerical}. Consistent with Theorem~\ref{th:con}, the extrapolation confirms that GD flow is able match the sample complexity threshold predicted by the theory.



\section{Convergence Rate of Gradient Descent on the Population Loss}\label{sec:dynamics2}
\begin{figure}
	\centering
	\includegraphics[width=1.\linewidth]{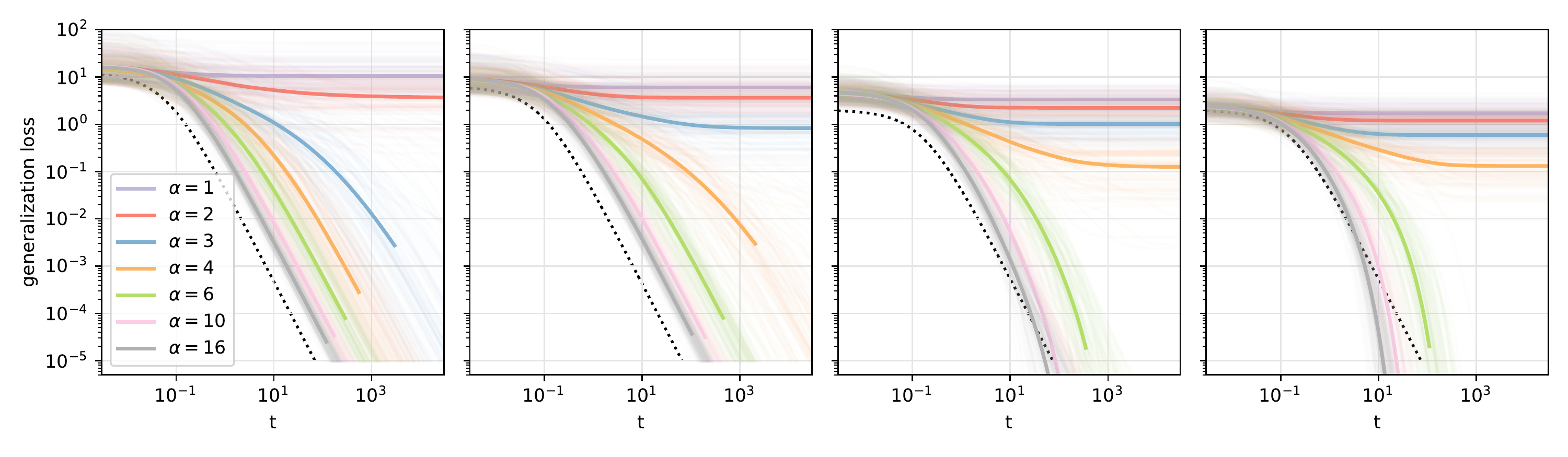}
	\caption{Convergence rates increasing the number of hidden units in the teacher $m^*$. The figures show log-average of $100$ simulations with $d=8$ and from left to right $m^*=2,4,8,16$, respectively. The individual simulations are shown in transparency. The dotted line is the quadratic decay and serves as reference. The figure shows that, if $\alpha>\alpha_c$ and $m^*>d-1$ the convergence rate becomes faster than quadratic, and in fact exponential as derived in Sec.~\ref{sec:dynamics}.}
	\label{fig:generalization_loss_m}
\end{figure}

Theorems~\ref{th:con} leaves open is the convergence rate of $A(t)$ towards $A_\infty$. This question is hard to answer for GD on the empirical loss, but it can be addressed for GD on the population loss. 

\begin{theorem}\label{th:convrate} Let $\{\wb_i(t)\}_{i=1}^m$ be the solution to~\eqref{eq:w_evolution_pop} for the initial data $\{\wb_i(0)\}_{i=1}^m$. Assume that $m\ge d$ and each $\wb_i(0)$ is drawn independently from a distribution that is absolutely continuous with respect to the Lebesgue measure on $\RR^d$.  Then 
\begin{equation}
    \label{eq:convpop1}
      A(t) = \frac1m \sum_{i=1}^m \wb_i(t) \wb_i^T(t) \to A^*\quad \text{as\ \ $t\to\infty$}
  \end{equation}
and we have the following nonasymptotic bound on the convergence rate of the population loss~\eqref{eq:4}:
\begin{equation}
\label{eq:Eboundnonas}
  \exists C >0 \quad : \quad \forall t \ge 0 \qquad E(A(t)) \le \frac{E(A(0))}{1+2 C
    E(A(0)) t  }
\end{equation}
In addition  $E(A(t))$ decays faster than $1/t$ as $t\to\infty$, i.e. $E(A(t))=o(1/t)$ and
\begin{equation}
\label{eq:Eboundas}
    \lim_{t\to\infty} t E(A(t)) = 0.
\end{equation}
\end{theorem}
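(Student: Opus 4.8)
\emph{Setup and convergence~\eqref{eq:convpop1}.} The plan is to pair the stable-manifold argument already invoked for Theorem~\ref{th:con} with a dissipation estimate for the flow~\eqref{eq:QGDpop}. Differentiating~\eqref{eq:4} along~\eqref{eq:QGDpop} gives $\tfrac{d}{dt}E(A(t))=\tr(\nabla E(A)\dot A)=-2\,\tr(A(\nabla E(A))^2)\le0$ (since $A\succeq0$ and $(\nabla E(A))^2\succeq0$), so $E(A(t))$ is nonincreasing and $\tr((A(t)-A^*)^2)\le E(A(t))\le E(A(0))$; the trajectory is thus bounded. Moreover~\eqref{eq:w_evolution_pop} is linear in $\wb_i$, so $\wb_i(t)=\Phi(t)\wb_i(0)$ with $\Phi$ the invertible fundamental solution of $\dot\Phi=-\nabla E(A(t))\Phi$, whence $A(t)=\Phi(t)A(0)\Phi(t)^T$ stays of full rank (a.s.). Now $L$ (the population loss in the variables $\{\wb_i\}$) is a polynomial, hence real-analytic, with bounded trajectory, so by the \L ojasiewicz gradient inequality the flow~\eqref{eq:w_evolution_pop} converges to a critical point $\{\wb_i^\infty\}$ of $L$, which by~\cite{smale1963stable,lee2016gradient} is a.s.\ not a strict saddle. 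It then suffices to classify the critical points: there, $\nabla E(A_\infty)\wb_i^\infty=0$ for all $i$ (with $A_\infty=\tfrac1m\sum_i\wb_i^\infty(\wb_i^\infty)^T$), so $\nabla E(A_\infty)=2(A_\infty-A^*)+\tr(A_\infty-A^*)I$ vanishes on $\mathrm{range}(A_\infty)$ and $\nabla E(A_\infty)A_\infty=0$. If $A_\infty$ has full rank, $\nabla E(A_\infty)=0$, hence $A_\infty=A^*$. If $\mathrm{rank}(A_\infty)<d\le m$, pick $(c_i)\ne0$ with $\sum_ic_i\wb_i^\infty=0$; the perturbation $\delta\wb_i=c_i\vb$ leaves $A_\infty$ unchanged to first order, so the Hessian of $L$ at $\wb^\infty$ on it is a positive multiple of $\vb^T\nabla E(A_\infty)\vb$. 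If $\nabla E(A_\infty)\not\succeq0$ this is a strict saddle; if $\nabla E(A_\infty)\succeq0$, then $\nabla E(A_\infty)$ and $A_\infty$ commute (both symmetric, product zero), $\mathrm{range}(A_\infty)$ is $A^*$-invariant, and combining $\tr(A_\infty-A^*)\ge0$ (from $\tr\nabla E(A_\infty)\ge0$) with the identity $A^*=A_\infty+\tfrac12\tr(A_\infty-A^*)\,I$ on $\mathrm{range}(A_\infty)$ and $A^*\succeq0$ forces $\tr(A_\infty-A^*)=0$, hence $A_\infty=A^*$. This gives~\eqref{eq:convpop1}.

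\emph{The nonasymptotic bound~\eqref{eq:Eboundnonas}.} Define $\phi(A):=\tr(A(\nabla E(A))^2)/E(A)^2$ for $A\ne A^*$. Along the trajectory $A(t)$ is of full rank and $\nabla E(A)=0$ only at $A=A^*$, so $\phi(A(t))>0$; an expansion of $\nabla E$ near $A^*$ (controlling $A-A^*$ on $\ker A^*$ by a Schur complement) shows $\phi(A)\to+\infty$ as $A\to A^*$. With the continuity of $\phi$ along the bounded trajectory and the convergence $A(t)\to A^*$ established above, this gives $C:=\inf_{t\ge0}\phi(A(t))>0$. Hence $\tfrac{d}{dt}E(A(t))=-2\,\tr(A(\nabla E(A))^2)\le-2C\,E(A(t))^2$, and integrating $\tfrac{d}{dt}(E(A(t))^{-1})\ge2C$ yields exactly~\eqref{eq:Eboundnonas}.

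\emph{The asymptotic rate~\eqref{eq:Eboundas}, and the hard part.} The \L ojasiewicz inequality at $\wb^\infty$ also gives $\wb_i(t)\to\wb_i^\infty$ and constants $\theta\in(0,\tfrac12]$, $c>0$ such that $\|\nabla_{\wb}L(\wb(t))\|\ge c\,L(\wb(t))^{1-\theta}$ for $t$ large, with $L(\wb^\infty)=\tfrac12E(A^*)=0$. Since $\tfrac{d}{dt}L(\wb(t))=-m\|\nabla_{\wb}L(\wb(t))\|^2$, we get $\tfrac{d}{dt}L\le-mc^2L^{2-2\theta}$, so $L(t)$ decays exponentially if $\theta=\tfrac12$ and like $t^{-1/(1-2\theta)}$, with exponent $>1$, if $\theta\in(0,\tfrac12)$; in either case $E(A(t))=2L(\wb(t))=o(1/t)$ and $tE(A(t))\to0$, which is~\eqref{eq:Eboundas} (one expects the two cases to correspond to $m^*\ge d$ and $m^*<d$, the latter giving the $O(t^{-2})$ rate mentioned in the introduction). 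I expect the crux to be the landscape classification of the first step---specifically the sub-case $\mathrm{rank}(A_\infty)<d$ with $\nabla E(A_\infty)\succeq0$, and the verification that every remaining critical point is a \emph{strict} saddle so that~\cite{smale1963stable,lee2016gradient} applies---the near-$A^*$ blow-up of $\phi$ being a secondary Schur-complement estimate.
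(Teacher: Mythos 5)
Your overall architecture is sound, and parts of it genuinely differ from the paper: for~\eqref{eq:convpop1} the paper simply reruns the proof of Theorem~\ref{th:con} (with $m\ge d$ any PSD matrix is realizable by weights, and $E$ in~\eqref{eq:4} is strictly convex with unique minimizer $A^*$), so your explicit critical-point/strict-saddle classification is more work than needed, though correct in outline; and your \L ojasiewicz-exponent argument for~\eqref{eq:Eboundas} is a legitimate alternative to the paper's route. The problem is the middle step, which you label ``secondary'': the claim that $\phi(A)=\tr\bigl(A(\nabla E(A))^2\bigr)/E(A)^2\to+\infty$ as $A\to A^*$ over full-rank PSD matrices. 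This is asserted with one parenthetical phrase, but it is exactly where the rank deficiency of $A^*$ (the case $m^*<d$) bites, and it is where the paper invests essentially all of its technical effort: Lemmas~\ref{lem:trace}--\ref{lem:trace3} build a stochastic representation of $A^{-1}(t)$ precisely to control this degeneracy along the trajectory, concluding $\tr\bigl((A(t)-A^*)A^{-1}(t)(A(t)-A^*)\bigr)\to0$ and hence the constant $C$ in~\eqref{eq:Eboundnonas}. You have inverted the difficulty: the landscape classification is the easy part, the lower bound on $\phi$ is the crux, and you have not proved it.

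To see that it is not a routine expansion, note that the natural way to bound $\phi$ from below is the paper's convexity-plus-Cauchy--Schwarz step, which gives $\phi(A)\ge\bigl[\tr\bigl((A-A^*)A^{-1}(A-A^*)\bigr)\bigr]^{-1}$; but this quantity does \emph{not} blow up for arbitrary PSD $A\to A^*$. For instance with $d=2$, $A^*=\mathrm{diag}(1,0)$ and $A=\bigl(\begin{smallmatrix}1&\delta\\ \delta&\epsilon\end{smallmatrix}\bigr)$ with $\delta^2=\epsilon/2$, one finds $\tr\bigl((A-A^*)A^{-1}(A-A^*)\bigr)\to1$ as $\epsilon\to0$, so the paper is forced to prove the vanishing of this trace \emph{along the flow} via the SDE representation, rather than as a local statement about the PSD cone. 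Your stronger pointwise claim about $\phi$ does appear to be true (in that example $\phi\sim2/\epsilon$, and in general the dangerous regime forces $\nabla E$ to concentrate on $\ker A^*$, where the PSD constraint gives $\|A_{12}\|_F^2\le\|A_{11}\|\,\tr A_{22}$, $\tau:=\tr(A-A^*)\asymp\|A-A^*\|$, and $\tr\bigl(A_{22}(2A_{22}+\tau I)^2\bigr)\ge\tau^2\tr A_{22}$, so that $\tr(A(\nabla E)^2)\gtrsim E^{3/2}\gg E^2$), but turning this sketch into a proof requires a careful block/Schur-complement estimate with competing terms of the same order $\|A-A^*\|^3$; nothing in your proposal supplies it. As written, then, the nonasymptotic bound~\eqref{eq:Eboundnonas} rests on an unproven inequality that replaces the paper's Lemmas~\ref{lem:trace}--\ref{lem:trace3}; if you carry out that estimate, your route would actually be a nice, purely deterministic alternative to the paper's stochastic-representation argument, but until then it is a genuine gap rather than a secondary detail.
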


This theorem is proven in Appendix~\ref{si:conrate}. The proof uses the convexity of $E(A)$ and deals with the added complexity of the factors $A$ multiplying $\nabla E$ in~\eqref{eq:QGDpop}. The argument also uses a stochastic representation formula for~$A^{-1}(t)$ given in Lemma~\ref{lem:trace} which is interesting in its own right.  We stress that~\eqref{eq:Eboundnonas} holds even when $m^*<d$, i.e. when $A^*$ is rank deficient, which is the difficult case for analysis since the factors $A$ multiplying $\nabla E$ in~\eqref{eq:QGDpop} converge to $A^*$ and hence become only positive semidefinite (as opposed to positive definite) as $t\to\infty$.

Theorem~\ref{th:convrate} holds for arbitrary initial conditions $A(0)$ with full rank. If the initial weights $\wb_i(0)$ are drawn independent from a standard Gaussian distribution in $\RR^d$, we know that $A(0) = m^{-1} \sum_{i=1}^m \wb_i(0) \wb^T_i(0) \to \text{Id}$ almost surely as $m\to\infty$ by the Law of Large Numbers. Therefore it makes sense to consider the GD flow~\eqref{eq:QGDpop} on the population loss when $A(0)=\text{Id}$. In that case, we have:

\begin{theorem}\label{th:QGDpop_dynamical_evolution}
    Let $A(t)$ be the solution to ~\eqref{eq:QGDpop} for the initial condition $A(0) = \text{Id}$. Denote by $U^*$ an orthogonal matrix whose columns are the eigenvectors of $A^*$, so that $A^* = U^* \Lambda^* (U^*)^T$ with $\Lambda^* = \text{diag}(\lambda^*_1, \ldots, \lambda^*_d)$. Let $\Lambda(t) = (U^*)^T A(t) U^*$ so that $\Lambda(0) = \text{Id}$. Then $\Lambda(t)$ remains diagonal during the dynamics and the evolution of its entries is given by
    \begin{equation}
        \label{eq:Lambdacomp}
        \begin{aligned}
            \dot\lambda_i & = 2 \sum_{j=1}^d (\lambda_j^*-\lambda_j)  \lambda_i + 4(\lambda_i^*-\lambda_i)  \lambda_i,\quad \lambda_i(0)=1, \qquad i=1,\ldots, d.
        \end{aligned}
    \end{equation}
    In addition the population loss is given by
\begin{equation}
\label{eq:lpop_eigen}
    E[A(t)]=\sum_{j=1}^{d}(\lambda_j(t)-\lambda_j^*)^2 + \frac12\Big(\sum_{j=1}^{d}\lambda_j(t)-\lambda_j^*\Big)^2.
\end{equation}
\end{theorem}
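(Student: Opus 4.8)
The plan is to exploit the invariance of the flow~\eqref{eq:QGDpop} under orthogonal conjugation together with uniqueness of solutions of the ODE. First I would set $\Lambda(t) = (U^*)^T A(t) U^*$ and differentiate: since $U^*$ is a constant orthogonal matrix, $\dot\Lambda = (U^*)^T \dot A U^*$, and substituting~\eqref{eq:QGDpop} while using $A = U^*\Lambda (U^*)^T$, $A^* = U^*\Lambda^*(U^*)^T$, and the conjugation-invariance of the trace, $\tr(A^*-A) = \tr(\Lambda^*-\Lambda)$, one obtains
\begin{equation}
  \dot\Lambda = 2\big[(\tr(\Lambda^*-\Lambda))\Lambda + (\Lambda^*-\Lambda)\Lambda + \Lambda(\Lambda^*-\Lambda)\big], \qquad \Lambda(0) = \text{Id}.
\end{equation}
This is exactly~\eqref{eq:QGDpop} with $A^*$ replaced by the diagonal matrix $\Lambda^*$, so it suffices to treat the case where $A^*$ is diagonal.

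Next I would observe that the right-hand side of this ODE maps the linear subspace of diagonal matrices into itself: if $\Lambda$ is diagonal then each of $(\tr(\Lambda^*-\Lambda))\Lambda$, $(\Lambda^*-\Lambda)\Lambda$, and $\Lambda(\Lambda^*-\Lambda)$ is a scalar multiple or a product of diagonal matrices, hence diagonal. The vector field is polynomial, hence locally Lipschitz, so solutions are unique; since $\Lambda(0)=\text{Id}$ lies in the diagonal subspace, the solution stays diagonal for as long as it exists, and by Theorem~\ref{th:convrate} (applied to the conjugated problem) it exists for all $t\ge 0$ and converges to $\Lambda^*$. Writing $\Lambda = \text{diag}(\lambda_1,\ldots,\lambda_d)$ and reading off the $i$-th diagonal entry of the three terms gives $\big(\sum_j(\lambda_j^*-\lambda_j)\big)\lambda_i$, $(\lambda_i^*-\lambda_i)\lambda_i$, and $\lambda_i(\lambda_i^*-\lambda_i)$; summing these with the prefactor $2$ yields precisely~\eqref{eq:Lambdacomp}. (As a consistency check, $\dot\lambda_i$ is proportional to $\lambda_i$, so $\lambda_i(t) = \exp\!\big(\int_0^t(\cdots)\,ds\big)>0$, in agreement with $A(t)$ staying positive definite.)

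Finally, for the loss formula I would use expression~\eqref{eq:4} from Lemma~\ref{lem:1pop}, namely $E(A) = \tr((A-A^*)^2) + \tfrac12(\tr(A-A^*))^2$. Since $A(t)-A^* = U^*(\Lambda(t)-\Lambda^*)(U^*)^T$ with $\Lambda(t)-\Lambda^*$ diagonal of entries $\lambda_j(t)-\lambda_j^*$, conjugation-invariance of the trace gives $\tr((A-A^*)^2) = \sum_j(\lambda_j(t)-\lambda_j^*)^2$ and $\tr(A-A^*) = \sum_j(\lambda_j(t)-\lambda_j^*)$, which is exactly~\eqref{eq:lpop_eigen}. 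I do not expect a serious obstacle here; the only point needing care is the appeal to uniqueness to justify the diagonal ansatz and the invocation of Theorem~\ref{th:convrate} to guarantee global existence of the reduced flow---this last point could alternatively be argued directly from~\eqref{eq:Lambdacomp} by noting that $E$ is non-increasing along the flow, which keeps $\Lambda(t)$ bounded.
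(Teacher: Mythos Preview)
Your proposal is correct and follows essentially the same approach as the paper: conjugate by $U^*$ to reduce to the case where $A^*$ is diagonal, observe that the resulting flow preserves the diagonal subspace, and read off the componentwise equations. The paper's proof is actually terser---it simply asserts that the conjugated equation ``shows that $\Lambda(t)$ remains diagonal for all times'' and does not spell out the loss formula---so your explicit appeal to local Lipschitzness and uniqueness to justify the invariance of the diagonal subspace, and your derivation of~\eqref{eq:lpop_eigen} from~\eqref{eq:4}, are welcome additions rather than deviations.
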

This theorem is proven in Appendix~\ref{si:popdiag}.  The equations in~\eqref{eq:Lambdacomp} can easily be solved numerically.  A formal asymptotic analysis of their solution when  $d$ is large is also possible, as shown next. This analysis  characterizes the asymptotic convergence rate of the eigenvalue to the target, which  can be used to obtain  an asymptotic convergence rate of the loss that is more precise than~\eqref{eq:Eboundas}: Specifically, it shows that $E(A(t))$ eventually decays as $1/t^2$ when $m^*<d$ and exponentially fast in $t$ when $m^*\ge d$.

\subsection{Formal asymptotic analysis of~\eqref{eq:Lambdacomp}}

\paragraph*{Case $m^*\ll d$.} Then $d-m^*$ eigenvalues of $A^*$ are zero, and without loss of generality we can order $\{\lambda_i\}_{i\le d} $ so that the zero eigenvalues of $A^*$ are last. Denoting $\epsilon(t) = \frac1{d-m^*}\sum_{i=m^*+1}^d \lambda_i(t)$, for $m^*<d$ \eqref{eq:Lambdacomp} then reads
\begin{align}
	\label{eq:eigenvalue_evolution_asymptotic_leada}
	\dot\lambda_i  & = 2\Big(\sum_{j=1}^{m^*}(\lambda^*_j-\lambda_j) - (d-m^*) \epsilon\Big)\lambda_i + 4(\lambda^*_i-\lambda_i)\lambda_i,\qquad i=1,\ldots, m^*
	\\
	\label{eq:eigenvalue_evolution_asymptotic_meanaa}
	\Dot{\epsilon} &  = 2\Big(\sum_{j=1}^{m^*}(\lambda^*_j-\lambda_j) - (d-m^*) \epsilon\Big)\epsilon - \frac{4}{d-m^*}\sum_{j=m^*+1}^d\lambda_j^2, \qquad \epsilon(0) = 1.
\end{align}
We will call the first $m^*$ eigenvalues $\lambda_i$ \textit{informative eigenvalues} and the remaining $d-m^*$ (captured by $\epsilon(t)$) \textit{non-informative eigenvalues}. We make two observations. Since $\lambda_i(0)=\epsilon(0)=1$, initially the leading order term in the equation for the uninformative eigenvalues $\epsilon(t)$ is 
\begin{equation}
    \label{eq:leadingeps}
    \dot \epsilon \approx -2d \epsilon^2\qquad \Rightarrow \qquad \epsilon(t) \approx \frac1{1+2dt} \qquad t\ll 1/d
\end{equation}
Substituting this solution into~\eqref{eq:eigenvalue_evolution_asymptotic_leada} we deduce
\begin{equation}
	\label{eq:eigenvalue_evolution_asymptotic_lead2}
	\frac{d}{dt}\log\lambda_i \approx -2d \epsilon(t) \approx -\frac{2d}{1+2dt} \qquad \Rightarrow \qquad 
	\lambda_i(t) \approx \frac1{1+2dt}
\end{equation}
\eqref{eq:leadingeps} and \eqref{eq:eigenvalue_evolution_asymptotic_lead2} imply an initial decreases in time of both non-informative and the informative eigenvalues. However, when $2d/(1+2dt)$ becomes of order one or smaller, the other terms in equation~\eqref{eq:eigenvalue_evolution_asymptotic_leada} take over and allow the informative eigenvalues to bounce back up. This happens at at time  $t_0 =O(1) $ in $d$. Afterwards the informative eigenvalues emerge from the non-informative ones with an exponential growth,  $\lambda_j(t)\sim \frac1{2d}e^{(2m^*+4)t}$. As a result, these informative eigenvalues eventually match the eigenvalues of the teacher at a typical time of order $t_J \sim \frac1{2m^*+4}\log(2d)$.  This analysis also implies a quadratic decay in time of the loss at long times
\begin{equation}
    \label{eq:loss_alpha}
    E(A(t)) \sim 1/(16t^2) \qquad \text{as} \ \ t\to\infty.
\end{equation}
In Sec.~\ref{si:numerical} we give additional details comparing the asymptotic analysis to the real dynamics when $m^*\le d$ but not necessarily much smaller. This analysis can e.g. be done quite explicitly when the unit in the teacher are orthonormal. It indicates that  $\epsilon(t) \approx 1/{[1+2(2+ d-m^*)t]}$ at all times, and as a result shows that
\begin{equation}\label{eq:loss_alpha2}
	E[A(t)] \approx \frac14\left(\frac{d-m^*}{1+2(2+d-m^*)t}\right)^2
\end{equation}
at all times.

\paragraph*{Case with $m^*\ge d\gg 1$.} Then \eqref{eq:Lambdacomp} can be written as
\begin{equation}
	\frac{d}{dt} \log\lambda_i  = 4\lambda_i^* + 2\sum_{j=1}^{d}\lambda^*_j -4\lambda_i-2\sum_{j=1}^{d}\lambda_j,\qquad i=1,\ldots, d
\end{equation}
which gives an exponential convergence to the target $A^*$, and consequently an exponential convergence in the population loss. For example, let us specialize to the case of a teacher with orthonormal hidden vectors, $\lambda^*_j = 1$ for $j=1,\dots,\min(m^*,d)$. The eigenvalues will converge to their target value as $|\lambda_j(t)-\lambda_j^*|\sim \frac1{2d}e^{-(2d+4)t}$. Consequently the loss~\eqref{eq:lpop_eigen} will converge to zero exponentially in this case 
\begin{equation}
\label{eq:loss_alpha_infty}
    E[A(t)] \sim \frac1{2d}e^{-2(2d+4)t} \qquad \text{as} \ \ t\to\infty.
\end{equation}

The results above are confirmed in the numerics. The cases when $m^*<d$ and  $m^*\ge d$ are shown by the first two and last two panels in Fig.~\ref{fig:generalization_loss_m}, respectively. When $m^*<d$ the decay of the empirical loss is quadratic, consistent with~\eqref{eq:loss_alpha}. In contrast, when $m^*\ge d$, the absence of non-informative eigenvalues removes the dominating terms in the loss~\eqref{eq:lpop_eigen}. Therefore the loss is dominated by the informative eigenvalues and decays exponentially, consistent with~\eqref{eq:loss_alpha_infty}. This can be clearly observed in Fig.~\ref{fig:generalization_loss_m}, where the four panels show the population loss using teachers with $m^*=2,4,8,16$ and $d=8$. The black dotted shows the quadratic asymptotic decay predicted in~\eqref{eq:loss_alpha}. The last two panels of the sequence show the exponential decay as predicted predicted in~\eqref{eq:loss_alpha_infty}

\begin{figure}
	\centering
	\includegraphics[width=.9\linewidth]{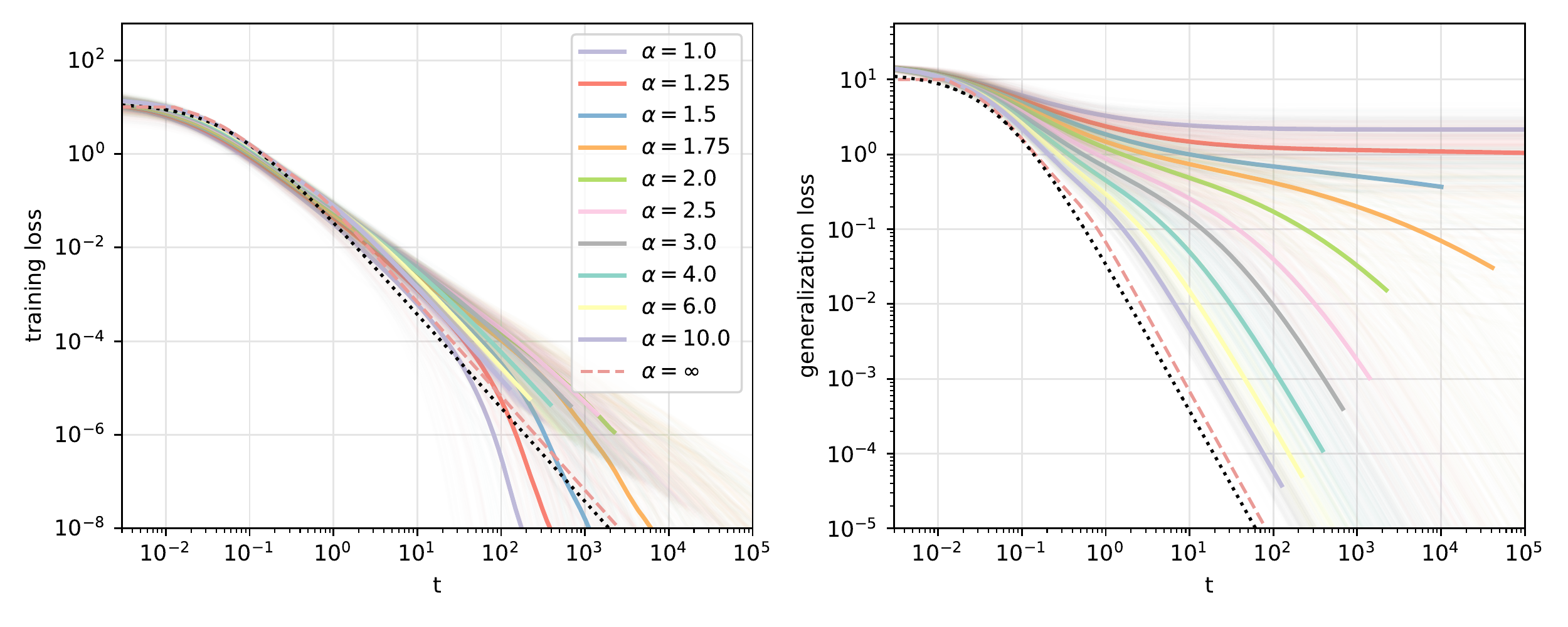}
	\caption{Training loss (left figure) and population loss (right figure) for $d=8$ and $m^*=1$. The plots show the average in log-scale of 100 simulation for each value of $\alpha$ and the individual realizations are shown in transparency. The results are compared with the descent in the population loss Eq.~\eqref{eq:QGDpop} (dashed pink line) and its approximation Eq.~\eqref{eq:loss_alpha2} (black dotted line). }
	\label{fig:loss_g_t_comparison}
\end{figure}

Fig.~\ref{fig:loss_g_t_comparison} shows the training and the population loss observed in the simulation using input dimension $d=8$ and a teacher with $m^*=1$ hidden unit. In this case our analysis suggests that the typical realization will converge to zero generalization error if $\alpha>\alpha_c=1.875$. This can be observed on the right panel of the Fig.~~\ref{fig:loss_g_t_comparison}. 
We used a dashed line to represent the gradient in the population loss~\eqref{eq:QGDpop} and used a dotted line to represent the approximated result~\eqref{eq:loss_alpha2}, observing the two being almost indistinguishable in the figure.

\section{Probing the Loss Landscape with the String Method}
\label{sec:simulations}

\begin{figure}
	\centering
	\includegraphics[width=1.\linewidth]{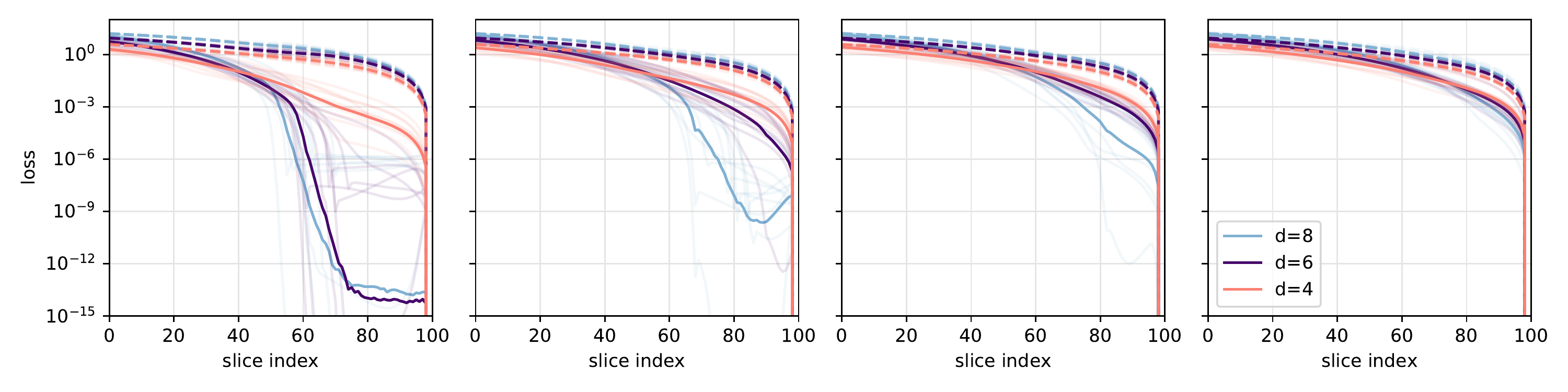}
	\caption{Results from the application of the string method. Training loss (solid line) and population loss (dashed line) evaluated across a string discretized with 100 images. Moving from left to right panels, the number of samples in the dataset increases, respectively $n=8,12,16,20$, while the teacher always has $m^*=1$ hidden units. The critical size to obtain a smooth landscape in average is $n=2d-1$, which is confirmed by the string reaching zero empirical loss at a finite value of the population loss, or not. Each string is mediated in log-scale over 10 realizations.}
	\label{fig:string}
\end{figure}

Finally, let us show that we can use the string method \cite{weinan2002string,weinan2007simplified,freeman2016topology} to probe the geometry of the training loss landscape and  confirm numerically Theorem~\ref{th:strict_convexity_loss}. The string method consists in connecting the student and the teacher with a curve (or string) in matrix space, and evolve this curve by GD while controlling its parametrization. In practice, this can be done efficiently  by discretizing the string into equidistant images or points (with the Frobenius norm as metric), and iterating upon (i) evolving these images by the descent dynamics, and (ii) reparamterizing the string to make the images equidistant again. At convergence the string will identify a minimum energy path between $A(0)$ and $\Qm^*$ which will possibly have a flat portion at zero empirical loss if this loss can be minimized by GD before reaching $A^*$. That is, along the string, the student  $A$ reaches the first minimum $A_\infty$ by GD, and, if $A_\infty\not = A^*$, then move along the set of minimizers of the empirical loss until it reaches $A^*$. The advantage of the method is that by replacing the physical time along the trajectory by the arclenght along it, it permits to go to infinite times (when $A=A^\infty$) and beyond  (when $A^\infty\not= A^*$), thereby probing features of the loss landscape not accessible by standard GD. (Of course it requires one to know the target $A^*$ in advance, i.e. the string method cannot be used instead of GD to identify this target in situations where it is unknown.) 

In Fig.~\ref{fig:string} we compare the strings obtained for input dimension 4 (red), 6 (purple), end 8 (blue). The strings are parametrized by 100 points represented on the horizontal axes. Moving from the leftmost to the rightmost panels in Fig.~\ref{fig:string} the number of samples in the dataset increases, namely $n=8,12,16,20$. Gradually all the $d$ represented will reach the critical size $2d-1$ and will have a landscape with a single minimum, the informative one. 
Observe that for relatively small sample sizes, there is low correspondence between the topology of the training loss landscape and the population loss one. As the size increases we notice that correlation increases until the two are just slightly apart. 






\section*{Acknowledgements}
	
	We thank Joan Bruna and Ilias Zadik for precious discussions. 
	SSM acknowledges the Courant Institute for the hospitality during his visit.
	We acknowledge funding from the ERC under the European
	Union’s Horizon 2020 Research and Innovation Programme Grant
	Agreement 714608-SMiLe.


\newpage
\appendix
\numberwithin{equation}{section}
\numberwithin{figure}{section}

\section{Proofs and Technical Lemmas}\label{si:proofs}


\subsection{Proof of Lemma~\ref{lem:1}}
\label{si:L1}

Inserting  $A(t)$ as defined in~\eqref{eq:Aetc} into~\eqref{eq:w_evolution_empirical} we arrive at 
\begin{equation}
    \label{eq:Astap1}
    \begin{aligned}
    \dot A(t) & = \frac1m\sum_{i=1}^m \EE_{\nu_n} \big[\tr\left(\Xm(A^* -A(t)) \right) \left(\Xm \wb_i(t) \wb^T_i(t) + \wb_i(t) \wb^T_i(t) \Xm\right) \big]\\
    & = \EE_{\nu_n} \big[\tr\left(\Xm(A^* -A(t)) \right) \left(\Xm A(t) + A(t) \Xm\right) \big]
    \end{aligned} 
\end{equation}
where we used $\Xm^T = \Xm$. This proves that $\dot A(t)$ is equal the the rightmost equation in~\eqref{eq:QGD}. To prove the first equality, simply note that, from~\eqref{eq:EPdef}, 
\begin{equation}
    \label{eq:Ediff}
    \nabla E_n(A) = -\EE_{\nu_n} \big[\tr\left(\Xm(A^* -A(t)) \right) \Xm  \big]
\end{equation}
which shows that~\eqref{eq:Astap1} can be written as $\dot A= - A \nabla E_n - \nabla E_n A$. \hfill$\square$

\subsection{Proof of Lemma~\ref{lem:1pop}}
\label{si:L2}

Equations~\eqref{eq:QGDpop} and~\eqref{eq:4} can be derived from~\eqref{eq:QGD} and~\eqref{eq:EPdef} by taking their expectation over $\nu$, owing to  the fact that the data is Gaussian and using Wick's theorem which asserts that
\begin{equation}
    \label{eq:wick}
    \EE_{\nu} [\Xm_{i,j} \Xm_{k,l}] = \delta_{i,j}\delta_{k,l} +  \delta_{i,k} \delta_{j,l} + \delta_{i,l} \delta_{j,k}
\end{equation}
This gives the result since $A^*$ and $A(t)$ are symmetric matrices. Note that this derivation can be generalized to non-Gaussian data, see Ref.~\cite{gamarnik2019stationary} for details. \hfill$\square$

\subsection{Proximal scheme}
\label{si:prox}

We note that \eqref{eq:QGD} (and similarly~\eqref{eq:QGDpop} if we use the population loss in~\eqref{eq:4} instead of the empirical loss in~\eqref{eq:EPdef}) can be viewed as the time continuous limit of a simple proximal scheme involving the Cholesky decomposition of $A$ and the standard Forbenius norm as Bregman distance. We state this result as:

\begin{proposition}
  \label{lem:proximal_scheme}
  Given $\Bm_0\in \RR^{d\times d}$ define the sequence of matrices
  $\{\Bm_p\}_{p\in \NN}$ via
  \begin{equation}
    \label{eq:8}
    \Bm_p \in \argmin_{\Bm} \left( \frac2{\tau} \tr\left(\left( \Bm
          - \Bm_{p-1}\right) \left( \Bm
          - \Bm_{p-1}\right)^T \right)+ E_n(\Bm\Bm^T) \right) 
  \end{equation}
  where $\tau>0$ is a parameter. Then
  \begin{equation}
    \label{eq:9}
    \Bm_{p}\Bm_p^T \to A(t) \qquad \text{as} \ \ \tau \to 0, \
    p\to\infty \ \ \text{with} \ \ p\tau \to t 
  \end{equation}
  where $A(t)$ solves~\eqref{eq:QGD} for the initial condition
  $A(0)=\Bm_0\Bm_0^T$.
\end{proposition}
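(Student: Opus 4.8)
The plan is to recognize the scheme \eqref{eq:8} as the implicit (backward) Euler discretization of the flow \eqref{eq:QGD}, written in the Cholesky-type variable $\Bm$ with the flat Frobenius metric (equivalently, on the $A$-side, as a proximal step for $E_n$ relative to the Bregman divergence induced by the parametrization $A=\Bm\Bm^T$), and then to run the classical convergence argument for such schemes. First, for fixed $\Bm_{p-1}$ the map $\Bm\mapsto \frac{2}{\tau}\tr\big((\Bm-\Bm_{p-1})(\Bm-\Bm_{p-1})^T\big)+E_n(\Bm\Bm^T)$ is continuous, nonnegative and coercive on $\RR^{d\times d}$, so a minimizer $\Bm_p$ exists; comparing its value at $\Bm_p$ with its value at $\Bm_{p-1}$ yields the discrete energy-dissipation inequality $E_n(\Bm_{p-1}\Bm_{p-1}^T)-E_n(\Bm_p\Bm_p^T)\ge \tfrac{2}{\tau}\|\Bm_p-\Bm_{p-1}\|^2$. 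Hence $E_n(\Bm_p\Bm_p^T)$ is nonincreasing and $\sum_{p=1}^N\|\Bm_p-\Bm_{p-1}\|^2\le \tfrac{\tau}{2}E_n(\Bm_0\Bm_0^T)$; over a fixed horizon $N\le T/\tau+1$, Cauchy--Schwarz upgrades this to $\sum_{p\le N}\|\Bm_p-\Bm_{p-1}\|\le C(T,\Bm_0)$ \emph{independently of $\tau$}, whence $\sup_{p\tau\le T}\|\Bm_p\|$, and $M_T:=\sup_{p\tau\le T}\|\Bm_p\Bm_p^T\|$, are bounded uniformly in $\tau$.

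Next, since the minimization in \eqref{eq:8} is unconstrained and the objective smooth, $\Bm_p$ is a critical point. Using $\nabla_{\Bm}\tr\big((\Bm-C)(\Bm-C)^T\big)=2(\Bm-C)$ and, by the chain rule together with the symmetry of $\nabla E_n$ recorded in \eqref{eq:Ediff}, $\nabla_{\Bm}\big[E_n(\Bm\Bm^T)\big]=2\,\nabla E_n(\Bm\Bm^T)\,\Bm$, criticality gives $\Bm_{p-1}=\big(I+\tfrac{\tau}{2}\nabla E_n(A_p)\big)\Bm_p$ with $A_p:=\Bm_p\Bm_p^T$. Substituting into $A_{p-1}=\Bm_{p-1}\Bm_{p-1}^T$ and using $\|A_p\|\le M_T$ together with the elementary bound $\|\nabla E_n(A)\|\le C_n(1+\|A\|)$ (immediate from \eqref{eq:Ediff}, the data being fixed and each $\Xm_k=\xb_k\xb_k^T$), one obtains, uniformly for $p\tau\le T$,
$$
A_p-A_{p-1}=-\tfrac{\tau}{2}\big(\nabla E_n(A_p)A_p+A_p\nabla E_n(A_p)\big)+O(\tau^2),
$$
which is the implicit Euler discretization of \eqref{eq:QGD} (the overall numerical factor being absorbed in the normalization of $\tau$), and in particular $\|A_p-A_{p-1}\|\le C_T'\tau$.

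Then I would let $A^\tau$ be the piecewise-linear interpolant with $A^\tau(p\tau)=A_p$. By the previous step $\{A^\tau\}$ is bounded and equi-Lipschitz on $[0,T]$, hence relatively compact in $C([0,T];\RR^{d\times d})$ by Arzelà--Ascoli; let $A$ be a subsequential uniform limit. On each subinterval $\dot A^\tau(s)=-\tfrac12\big(\nabla E_n(A_p)A_p+A_p\nabla E_n(A_p)\big)+O(\tau)$, and since $A_p$ differs from $A^\tau(s)$ by $O(\tau)$ and $\nabla E_n$ is continuous, this equals $-\tfrac12\big(\nabla E_n(A^\tau(s))A^\tau(s)+A^\tau(s)\nabla E_n(A^\tau(s))\big)+o(1)$ uniformly. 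Integrating and sending $\tau\to0$ along the subsequence shows $A$ solves \eqref{eq:QGD} with $A(0)=\Bm_0\Bm_0^T$. Since the right-hand side of \eqref{eq:QGD} is polynomial, hence locally Lipschitz, this solution is unique (and global, cf. Theorem~\ref{th:con}); therefore the full family $A^\tau\to A$ uniformly on $[0,T]$, and as $A^\tau$ is equicontinuous and $p\tau\to t$ we conclude $\Bm_p\Bm_p^T=A^\tau(p\tau)\to A(t)$, which is \eqref{eq:9}.

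The one delicate point is the a priori bound of the first step: the discrete trajectories must stay in a fixed compact set on $[0,T]$ \emph{uniformly in $\tau$}, and this cannot be read off from the loss value alone, since $E_n$ degenerates along the orthogonal complement of $\mathrm{span}\{\Xm_k\}$ (nonzero whenever $n<d(d+1)/2$) and so does not control $\|A\|$ --- this is precisely what makes the empirical loss more awkward here than the population loss $E$ of \eqref{eq:4}. The telescoping energy-dissipation estimate, turned by Cauchy--Schwarz into a bound on the total length of the discrete path over a fixed time window, is what closes this gap; with it in hand the rest is the textbook argument for convergence of minimizing-movement / implicit-Euler approximations of a smooth (here non-convex in $\Bm$) gradient-type flow. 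Alternatively one could bypass Arzelà--Ascoli with a direct discrete Grönwall estimate on $\|A^\tau(t)-A(t)\|$, using local Lipschitzness of the flow on the compact set produced in the first step, which additionally yields a convergence rate.
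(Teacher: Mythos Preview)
Your argument is correct and substantially more rigorous than the paper's. The paper gives only a formal derivation: it writes $\Bm=\Bm_{p-1}+\tau\tilde\Bm$, Taylor-expands the proximal objective to leading order in~$\tau$, reads off $\tilde\Bm_p=-\nabla E_n(\Bm_{p-1}\Bm_{p-1}^T)\Bm_{p-1}+O(\tau)$, and then simply asserts that in the limit $\tau\to0$, $p\tau\to t$ this yields the flow $\dot\Bm=-\nabla E_n(\Bm\Bm^T)\Bm$, from which \eqref{eq:QGD} follows for $A=\Bm\Bm^T$. No uniform-in-$\tau$ bounds, compactness, or passage to the limit are addressed; in particular the degeneracy of $E_n$ that you correctly flag is not discussed at all.

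Your minimizing-movements route---the discrete energy-dissipation inequality, converted via Cauchy--Schwarz into a $\tau$-independent path-length bound on $[0,T]$, then Arzel\`a--Ascoli plus uniqueness of the polynomial ODE---is the standard De~Giorgi framework and actually \emph{proves} the convergence rather than merely identifying the formal limit. What the paper's shortcut buys is brevity and an explicit intermediate $\Bm$-level flow, which makes the Cholesky/Bregman interpretation transparent; what your approach buys is an honest proof, robust to the lack of coercivity of $E_n$ when $n<d(d+1)/2$. (Both derivations incur the same harmless factor-of-two time rescaling relative to \eqref{eq:QGD}, coming from the choice of prefactor $2/\tau$ in \eqref{eq:8}.)
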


\begin{proof}
  Look for a solution to the minimization
  problem in~\eqref{eq:8} of the form
  \begin{displaymath}
    \Bm = \Bm_{p-1} + \tau \tilde \Bm
  \end{displaymath}
  To leading order in $\tau$, the objective function in~\eqref{eq:8}
  becomes
  \begin{displaymath}
    \begin{aligned}
      &\frac2{\tau} \tr\left(\left( \Bm - \Bm_{p-1}\right)
        \left( \Bm - \Bm_{p-1}\right)^T\right)+
      E_n(\Bm\Bm^T) \\
      & = 2\tau \tr (\tilde \Bm\tilde \Bm^T) + \tau
      \tr\left( \left(\Bm_{p-1} \tilde\Bm^T+\tilde \Bm\Bm^T_{p-1}\right) \nabla
        E_n(\Bm_{p-1}\Bm^T_{p-1})\right) + O(\tau^2)\\
      & =  \tau \tr \left( \tilde \Bm\left(\tilde \Bm^T+ \Bm^T_{p-1}\nabla
          E_n(\Bm_{p-1}\Bm^T_{p-1})\right)\right) \\
      & + \tau \tr \left( \left(\tilde \Bm+ \nabla
          E_n(\Bm_{p-1}\Bm^T_{p-1}) \Bm_{p-1}\right)\tilde \Bm^T\right) + O(\tau^2)
    \end{aligned}
  \end{displaymath}
  which we can set to zero by choosing $\tilde \Bm = \tilde \Bm_p$ with
  \begin{displaymath}
    \tilde \Bm_p = - \nabla
        E_n(\Bm_{p-1}\Bm^T_{p-1}) \Bm_{p-1}+ O(\tau)
    \end{displaymath}
    In terms of the minimizer $\Bm_p$ of the orginal problem this equation
    can be written as 
    \begin{displaymath}
    \tau^{-1}\left(\Bm_p - \Bm_{p-1} \right) = - \nabla
        E_n(\Bm_{p-1}\Bm^T_{p-1}) \Bm_{p-1} + O(\tau)
      \end{displaymath}
      Letting $\tau\to0$ and $p\to\infty$ with $p\tau\to t$, we deduce
      that $\Bm_p \to \Bm(t)$ solution to
      \begin{equation}
        \label{eq:Bm}
        \dot \Bm(t) = - \nabla
        E_n(\Bm(t)\Bm^T(t)) \Bm(t)
      \end{equation}
      Setting $A(t) = \Bm(t) \Bm^T(t)$ we have
      \begin{displaymath}
        \begin{aligned}
          \dot A(t) &= \dot \Bm(t) \Bm^T(t) + \Bm(t) \dot \Bm^T(t)\\
          & = - \nabla
        E_n(\Bm(t) \Bm^T(t)) \Bm(t) \Bm^T(t) - \Bm(t) \Bm^T(t)  \nabla
        E_n(\Bm(t) \Bm^T(t)) \\
        & = - \nabla
        E_n(A(t)) A(t) - A(t)  \nabla
        E_n(A(t))
        \end{aligned}
      \end{displaymath}
     which is~\eqref{eq:QGD}.
\end{proof}


\subsection{Proof of Theorem~\ref{th:strict_convexity_loss}}\label{si:geometrical_consideration_prop_strict}

Let $A_{n,d}$ be a symmetric, positive semidefinite minimizer of the empirical loss and  consider $A_{n,d}-A^*$. Since this matrix is symmetric, there exists an orthonormal basis in $\RR^{d}$  made of its eigenvectors, $\{\vb_i\}_{i=1}^{d}$. Since $A_{n,d}$ is positive semidefinite by assumption and $A^*=\wb^* (\wb^*)^T$ is rank one, $d-1$ eigenvalues of $A_{n,d}-A^*$ are nonnegative, and only one can be positive, negative, or zero. Let us order the eigenvectors  $\vb_i$ such that their associate eigenvalues are $\lambda_i\ge0$ for $i=1,\ldots,d$ and $\lambda_d \in \RR$. Given the data vector $\{\xb_k\}_{k=1}^n$, to be a minimizer of the empirical loss $A_{n,d}$ must satisfy
\begin{equation}
    \label{eq:C1}
    \forall k=1,\ldots,n \quad : \quad 0= \tr  [\Xm_k (A_{n,d}-A^*)] = \< \xb_k, (A_{n,d}-A^*) \xb_k\> = \sum_{i=1}^d \lambda_i |\xb_k \cdot \vb_i|^2
\end{equation}
Let us analyze when \eqref{eq:C1} admits solutions that are not $A^*$. To this end, assume first that $\lambda_d\ge0$. Then, as soon as $n\ge d$, for each $i\in\{1,\ldots,d\}$ with probability one there is at least one $k\in\{1,\ldots,n\}$ such that $\xb_k \cdot \vb_i\not=0$. As a result, if $\lambda_d\ge 0$, as as soon as $n\ge d$, the only solution to~\eqref{eq:C1} is $\lambda_i =0$ for all $i=1,\ldots,d$, i.e. $A_{n,d}=A^*$ a.s.
    
The worst scenario case is actually when $\lambda_d<0$. In that case \eqref{eq:C1} can be written
\begin{equation}
    \label{eq:C1b}
    \forall k=1,\ldots,n \qquad : \quad \sum_{i=1}^{d-1} \lambda_i |\xb_k \cdot \vb_i|^2 = |\lambda_d| |\xb_k \cdot \vb_d|^2
\end{equation}    
This equation means that if we let $\hat \xb_k = \xb_k \sign(\xb_k \cdot \vb_d)$ (i.e. $\hat \xb_k \parallel \xb_k$ but lie in the same hemisphere as $\vb_d$), then the vectors $\hat \xb_k$ must all lie on the surface of an elliptical cone $C$ centered around $\vb_d$, with the principal axes of the ellipsoids aligned with $\vb_i$, $i=1,\ldots,d-1$; the intersection of the cone with the hyperplane $\xb\cdot \vb_d=1$ is the $d-1$ ellipsoid whose boundary satisfies the equation
\begin{equation}
    \label{eq:C1c}
    \sum_{i=1}^{d-1} \lambda_i |\xb \cdot \vb_i|^2 = |\lambda_d| 
\end{equation}
In $\RR^d$, it takes $\frac12d(d+1)$ vectors $\hat \xb_k$ to uniquely define such a elliptical cone. This means that, in the worst case scenario, we recover the threshold $n=\frac12d(d+1)$. This worst case scenario is however unlikely. To see why, assume that $n\ge d$, and consider the convex polyhedral cone spanned by $\{\hat \xb_k\}_{k=1}^n$, i.e. the region
\begin{equation}
    \label{eq:polycone}
    C_{n,d} = \left\{ \xb : \xb={\textstyle\sum_{k=1}^n }\alpha_k \hat \xb_k, \alpha_k\ge0, k=1,\ldots,n\right\}\subset \RR^d
\end{equation}
In order that~\eqref{eq:C1c} have a nontrivial solution, the extremal rays of $C_{n,d}$ (i.e. its edges of dimension 1) must coincide with the set $\{\hat \xb_k\}_{k=1}^n$, that is, all rays $\alpha_k \hat \xb_k$, $\alpha_k \ge0$ for $k=1,\ldots,n$ must lie on the boundary of $C_{n,d}$ and none can be in the interior of $C_{n,d}$; indeed these extremal rays must also be on the boundary of elliptical cone $C$. However, Theorem 3' in~\cite{cover1967geometrical} asserts that,  if the vectors in the set $\{\hat \xb_k\}_{k=1}^n$ are in general position (i.e. if the vectors in any subset of size no more than $d$ are linearly independent, which happens with probability one if $\xb_k$ are i.i.d. Gaussian), the number $N_{n,d}$ of extremal rays of $C_{n,d}$ satisfies
\begin{equation}
    \EE_\nu N_{n,d} = 2n \frac{C(n-1,d-1)}{C(n,d)}, \qquad C(n,d) = 2\sum_{k=0}^{d-1} \binom{n-1}{k}
\end{equation}
This implies that
\begin{equation}
    \label{eq:limN}
    \lim_{d\to\infty}d^{-1} \EE_\nu
    N_{\lfloor\alpha d\rfloor,d} = 
        \left\{\begin{aligned}
        \alpha& \qquad &&\text{if} \ \alpha \in[1,2]\\
        2 & &&\text{if} \ \alpha \in(2,\infty)
        \end{aligned}\right.
\end{equation}
Since $N_{n,d}\le n$ by definition, we have $d^{-1}N_{\lfloor\alpha d\rfloor,d}\le \alpha$, which from~\eqref{eq:limN} implies that $\lim_{d\to\infty}d^{-1} N_{\lfloor\alpha d\rfloor,d} = \alpha$ a.s. if $\alpha \in [1,2]$. In turns this implies that the probability that all the vectors in $\{\hat\xb_k\}_{k=1}^n$ be extremal ray of the cone $C_{n,d}$ tends to 1 as $d,n\to\infty$ with $n=\lfloor\alpha d\rfloor$ and $\alpha\in[0,2]$. This also means that the probability that \eqref{eq:C1c} has solution with $\lambda_d<0$ also tends to 1 in this limit, i.e. \eqref{eq:lim1} holds. Conversely, since $\lim_{d\to\infty}d^{-1} N_{\lfloor\alpha d\rfloor,d} = 2<\alpha$ for $\alpha>2$, the probability that $N_{n,d} \not= n $ remains positive as $d,n\to\infty$ with $n=\lfloor\alpha d\rfloor$ and $\alpha\in(2,\infty)$.  This  means that the probability that \eqref{eq:C1c} has no solution with $\lambda_d<0$ is positive in this limit, i.e. \eqref{eq:lim2} holds. \hfill $\square$

\subsection{Heuristic argument for arbitrary $m^*$ and $d$}\label{si:geometrical_consideration_heuristic}

Minimizers of the empirical loss satisfy:
\begin{equation}
  \label{eq:step1}
      \forall k=1,\ldots, n \quad : \quad \tr [\Xm_k (A-A^*)] = \<\xb_k, (A-A^*) \xb_k\> =0   
  \end{equation}
Clearly $A= A^*$ is always a solution to this set of equation. The question is: how large should $n$ be in order that $A= A^*$ be the only solution to that equation? If $A $ was an arbitrary symmetric matrix, we already know the answer: with probability one, we need $n\ge \frac12d(d+1)$. What makes the problem more complicated is that $A$ is required positive semidefinite. If we assume that $A^*$ has rank $m^*<d$, this implies that $C=A-A^*$ must be a symmetric matrix with $d-m^*$ nonnegative eigenvalues and $m^*$ eigenvalues whose sign is unconstrained, and we need to understand what this requirement  imposes on the solution to~\eqref{eq:step1}.

In the trivial case when $m^*=0$ (i.e. $A^*=0$), if we decompose $A= U \Lambda U^T$, where $U$ contains its eigenvectors and $\Lambda$ is a diagonal matrix with its eigenvectors $\lambda_i\ge0$, $i=1,\ldots, d$, \eqref{eq:step1} can be written as 
\begin{equation}
  \label{eq:step2}  
      \forall k=1,\ldots, n \quad : \quad \sum_{i=1}^d \lambda_i (\vb_i     \cdot \xb_k)^2 =0   
\end{equation}    
where $\vb_i$, $i=1,\ldots, d$ are linearly independent eigenvectors of $A$. In this case, since $\lambda_i\ge0$, with probability one we only need $n=d$ data vectors to guarantee that the only solution to this equation is $\lambda_i=0$ for all $i=1,\ldots, d$, i.e. $A=0$. Another way to think about this is to realize that the nonnegativity constraint on $A$ has removed $\frac12d(d-1)$ degrees of freedom from the original $\frac12d(d+1)$ in $A$.

If $m^*>0$, the situation is more complicated, but we can consider the projection of $A$ in the subspace not spanned by $A^*$, i.e. the $(d-m^*)\times (d-m^*)$ matrix $A^\perp$ defined as
\begin{equation}
    A^\perp = (V^*)^T A V^*
\end{equation}
where $V^*$ is the $d\times (d-m^*)$ matrix whose columns are  linearly independent eigenvectors of $A$ with zero eigenvalue. All the eigenvalues of $A^\perp$ are nonnegative, and this imposes  $\frac12 (d-m^*) (d-m^*-1)$ constraints in the subspace where $A^\perp $ lives. If we simply subtract this number to $\frac12 d(d+1)$ we obtain
\begin{equation}
    \label{eq:step3}
    n_c = \tfrac12 d(d+1)-\tfrac12 (d-m^*) (d-m^*-1) = d(m^*+1) - \tfrac12 m^*(m^*+1)
\end{equation}
which is precisely \eqref{eq:alphac}.

This argument is nonrigorous because we cannot \textit{a~priori} treat separately~\eqref{eq:step1} in the subspace spanned by $A^*$ and its orthogonal complement. Yet, our numerical results suggest that this assumption is valid, at least as $d\to\infty.$

\subsection{Proof of Theorem~\ref{th:con}}
\label{si:con}

Since \eqref{eq:w_evolution_empirical} is a standard gradient flow, $\dot \wb_i(t) =-m \partial_{\wb_i} L_n$ and the loss is a quartic polynomial in the weights, we can invoke the Stable Manifold Theorem~\cite{smale1963stable} to conclude that the stable manifolds of local minimizers of $L_n$ have codimension 0, whereas the stable manifolds of all other critical points have codimension strictly larger than 0. As a result, the weights must converge towards a local minimizer of the loss with probability one with respect to random initial data drawn for any probability distribution that is absolutely continuous with respect to the Lebesgue measure on $\RR^{md}$: this is the case under our assumption on $\{\wb_i(0)\}_{i=1}^m$.  Denoting $\wb_i^\infty=\lim_{t\to\infty} \wb_i(t)$, since $\{\wb_i^\infty\}_{i=1}^m$ is a local minimizer of $L_n$,  there exits $\delta>0$ such that 
\begin{equation}
\label{eq:locminw}
    \frac 1m \sum_{i=1}^m |\wb_i-\wb_i^\infty|^2 \le \delta \qquad \Rightarrow \qquad L_n(\wb_i) \ge L_n(\wb_i^\infty)
\end{equation}
Since the GD flow in \eqref{eq:w_evolution_empirical} for the weights implies \eqref{eq:QGD} as evolution equation for $A(t) = m^{-1}\sum_{i=1}^m \wb_i(t) \wb_i^T(t)$ by Lemma~\ref{lem:1},  it follows that $\lim_{t\to\infty} A(t) = A_\infty = m^{-1}\sum_{i=1}^m \wb_i^\infty \wb_i^\infty$.  As soon as $m\ge d$, any symmetric positive semidefinite $A$ can be constructed via a set of weights $\{\wb_i\}_{i=1}^m$, i.e. 
\begin{equation}
   \forall A=A^T \ \text{PSD}  \quad \exists \{\wb_i\}_{i=1}^m \quad : \quad A= \frac 1m \sum_{i=1}^m \wb_i\wb_i^T \quad \& \quad L_n(\wb_1,\ldots,\wb_m) = 2 E_n(A)
\end{equation}
This implies that 
$A_\infty$ must be a local minimizer of the empirical loss $E_n(A)$, otherwise for any $\epsilon>0$ there would be a $A$ such that
\begin{equation}
    \tr\big[(A-A_\infty)^2\big] \le \epsilon \quad \& \quad E_n(A) < E_n(A_\infty)
\end{equation}
Choosing $\{\wb_i\}_{i=1}^m$ such that $A= \frac 1m \sum_{i=1}^m \wb_i\wb_i^T$ would contradict~\eqref{eq:locminw}. This also implies~\eqref{eq:convemp2} since all the minimizers of the empirical loss are global minimizers by convexity, and  $E_n(A_\infty)=E_n(A^*)=0$.\hfill $\square$

\subsection{Proof of Theorem~\ref{th:convrate}}
\label{si:conrate}

We begin with:
\begin{proof}[Proof of~\eqref{eq:convpop1} in Theorem~\ref{th:convrate}]
 We can follow the same steps as in the proof of Theorem~\ref{th:con}, using \eqref{eq:w_evolution_pop} instead of \eqref{eq:w_evolution_empirical}, and noticing that this equation is also a standard gradient flow, $\dot \wb_i(t) =-m \partial_{\wb_i} L$ on the quartic loss 
\begin{equation}
    L(\wb_1,\ldots,\wb_m) = 2 E(A) \qquad \text{with} \ \ A=\frac1m \sum_{i=1}^m \wb_i \wb_i^T
\end{equation}
and $E(A)$ given in~\ref{eq:4}. The only difference is that the minimizer of $E(A)$ is now unique and given by $A^*$, which guarantees~\eqref{eq:convpop1}.
\end{proof}

This leaves us with establishing the convergence rates in~\eqref{eq:Eboundnonas} and~\eqref{eq:Eboundas}.
Their proof replies on three Lemmas that we state first.

\begin{lemma}
  \label{lem:trace}
  Let $A(t)$ be the solution to the GD flow \eqref{eq:QGDpop} and assume that $A(0)$ has full rank. Then we have
  \begin{equation}
  \label{eq:repSDE}
      A^{-1}(t) = \EE[ \zb(t) \zb^T(t)] 
  \end{equation}
  where $\zb{}(t)\in\RR^d$ solves the stochastic differential equation (SDE)
  \begin{equation}
  \label{eq:sde}
      d \zb =  (\tr(A-A^*) ) \zb dt - 2  A^* \zb dt + 2 d\Wb(t)
  \end{equation}
  Here $\Wb(t)$ is a standard $d$-dimensional Wiener process and we impose that the initial condition $\zb(0)$ be Gaussian, independent of $\Wb$, with mean zero and covariance $\EE [\zb(0) \zb^T(0)] = A^{-1}(0)$.
\end{lemma}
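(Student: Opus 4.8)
The plan is to show that both sides of~\eqref{eq:repSDE} solve the same linear matrix ODE with the same initial value, and then to conclude by uniqueness. First I would check that $A(t)$ is invertible for every $t\ge 0$, so that $A^{-1}(t)$ makes sense. Along the flow~\eqref{eq:QGDpop} the population loss is nonincreasing, since $\tfrac{d}{dt}E(A)=\tr(\nabla E(A)\,\dot A)=-2\,\tr\big(\nabla E(A)\,A\,\nabla E(A)\big)\le 0$ because $A$ is positive semidefinite; hence $\tr\big((A(t)-A^*)^2\big)\le E(A(0))$ and $A(t)$ remains in a bounded set. Jacobi's formula then gives $\tfrac{d}{dt}\log\det A(t)=\tr\big(A^{-1}(t)\dot A(t)\big)=(2d+4)\,\tr\big(A^*-A(t)\big)$, which is bounded on every finite interval, so $\det A(t)$ cannot vanish at any finite time. (Equivalently one can note that $\wb_i(t)=\Phi(t)\wb_i(0)$, where $\Phi(t)$ is the invertible fundamental matrix of the linear nonautonomous ODE $\dot\wb=[\tr(A^*-A)I+2(A^*-A)]\wb$, so that $A(t)=\Phi(t)A(0)\Phi(t)^T$ has full rank whenever $A(0)$ does.)

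Next I would compute the ODE obeyed by $A^{-1}(t)$. Rewriting~\eqref{eq:QGDpop} as $\dot A=2\,\tr(A^*-A)A+2A^*A+2AA^*-4A^2$ and using $\tfrac{d}{dt}A^{-1}=-A^{-1}\dot A\,A^{-1}$, the factors telescope (for instance $A^{-1}A^*A\,A^{-1}=A^{-1}A^*$ and $A^{-1}A^2A^{-1}=I$), leaving the closed linear equation
\begin{equation}
\tfrac{d}{dt}A^{-1}=2\,\tr(A-A^*)\,A^{-1}-2A^*A^{-1}-2A^{-1}A^*+4I .
\end{equation}
On the other side, since $A(t)$ is continuous and bounded by the previous step, $b(t):=\tr(A(t)-A^*)\,I-2A^*$ is a continuous, bounded, \emph{symmetric} deterministic matrix, so~\eqref{eq:sde} is a well-posed linear (Ornstein--Uhlenbeck type) SDE with a unique strong solution having finite second moments (Gronwall). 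Itô's formula gives $d(\zb\zb^T)=b\,\zb\zb^T\,dt+\zb\zb^T b\,dt+4I\,dt+(\text{martingale increment})$, the term $4I\,dt$ being the quadratic variation of $2\Wb$; taking expectations kills the martingale (by the moment bound), so $M(t):=\EE[\zb(t)\zb^T(t)]$ solves $\dot M=bM+Mb+4I=2\,\tr(A-A^*)M-2A^*M-2MA^*+4I$, i.e.\ exactly the equation satisfied by $A^{-1}(t)$. Because $M(0)=\EE[\zb(0)\zb^T(0)]=A^{-1}(0)$ by the prescribed law of $\zb(0)$, uniqueness for this linear matrix ODE forces $M(t)=A^{-1}(t)$ for all $t\ge0$, which is~\eqref{eq:repSDE}.

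The conceptual content is light; the work is in the bookkeeping and in two points of technical care. The first is ensuring $A(t)$ is invertible at every finite time --- this is the delicate regime $m^*<d$, where $A(t)\to A^*$ becomes singular as $t\to\infty$, so invertibility can only be (and only needs to be) claimed for finite $t$; it is handled by the a priori bound coming from the monotone decay of $E$. The second is justifying that the stochastic-integral term in the Itô expansion is a genuine mean-zero martingale, which again follows from a Gronwall bound on $\EE|\zb(t)|^2$ using that the drift coefficient $b(t)$ is bounded. I expect the invertibility of $A(t)$ to be the main obstacle, in the sense that it is the only place where the structure of the problem (not just routine stochastic calculus) really enters.
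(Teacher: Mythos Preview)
Your proposal is correct and follows essentially the same approach as the paper: derive the linear ODE for $A^{-1}(t)$ from~\eqref{eq:QGDpop}, use It\^o's formula to show that $\EE[\zb(t)\zb^T(t)]$ obeys the same ODE with the same initial data, and conclude by uniqueness. The paper's proof is in fact much terser---it does not spell out the invertibility argument for $A(t)$ at finite times nor the martingale justification---so your added care on those two points is welcome but not a departure in method.
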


\begin{lemma}
  \label{lem:trace2}
  Under the conditions of Lemma~\ref{lem:trace}, we have the following identity for all $t\ge \tau\ge0$:
  \begin{equation}
  \label{eq:Ainv0}
    \begin{aligned}
      A^{-1}(t) &= 
      e^{-2A^* (t-\tau)} A^{-1}(\tau) e^{-2A^* (t-\tau)} \exp\left({\textstyle 2\int_\tau^t} \tr(A(s)-A^*) ds\right)\\
    & + 4 \int_\tau^t  e^{-4A^* (t-s)}  \exp\left({\textstyle 2\int_s^t} \tr(A(u)-A^*) du\right) ds
      \end{aligned}
  \end{equation}
\end{lemma}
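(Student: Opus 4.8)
\textbf{Proof proposal for Lemma~\ref{lem:trace2}.} The plan is to bypass the stochastic representation and work directly with the matrix ODE satisfied by $M(t):=A^{-1}(t)$. First I would check that $A(t)$ remains invertible for every finite $t$, so that $M(t)$ is well defined even in the rank-deficient case $m^*<d$. Writing the weight flow \eqref{eq:w_evolution_pop} as $\dot\wb_i = \big[\tr(A^*-A)\,I + 2(A^*-A)\big]\wb_i$, all the weights are transported by the \emph{same} fundamental matrix $\Phi(t)$ solving $\dot\Phi = \big[\tr(A^*-A)\,I+2(A^*-A)\big]\Phi$, $\Phi(0)=I$; hence $A(t)=\Phi(t)\,A(0)\,\Phi^T(t)$ is invertible whenever $A(0)$ is. (Equivalently, positivity of $A^{-1}$ is already guaranteed by the identity $A^{-1}=\EE[\zb\zb^T]$ of Lemma~\ref{lem:trace}.)

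Next, differentiating $M=A^{-1}$ via $\dot M=-A^{-1}\dot A\,A^{-1}$ and substituting $\dot A = 2\big[(\tr(A^*-A))A + (A^*-A)A + A(A^*-A)\big]$ from \eqref{eq:QGDpop}, the $A^{-1}A$ and $AA^{-1}$ factors cancel and one is left with the linear, time-dependent, Lyapunov-type equation
\begin{equation}
  \dot M(t) = 2\,c(t)\, M(t) - 2A^* M(t) - 2 M(t) A^* + 4 I, \qquad c(t):=\tr\big(A(t)-A^*\big).
\end{equation}
The same equation can be obtained from Lemma~\ref{lem:trace} by applying It\^o's formula to $\zb(t)\zb^T(t)$ using \eqref{eq:sde}, noting that the diffusion term contributes $2\,d\Wb\,(2\,d\Wb)^T = 4I\,dt$, and taking expectations; either starting point leads to the same place.

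Then I would solve this ODE by variation of parameters. The homogeneous propagator from time $\tau$ to time $t$ is $G\mapsto e^{-2A^*(t-\tau)}\,G\,e^{-2A^*(t-\tau)}\exp\!\big(2\int_\tau^t c(s)\,ds\big)$; this is verified by direct differentiation, using that $A^*$ commutes with $e^{-2A^*s}$ and that the scalar factor $\exp(2\int c)$ commutes with everything. Applying Duhamel's formula with forcing $4I$ and collapsing $e^{-2A^*(t-s)}\,I\,e^{-2A^*(t-s)} = e^{-4A^*(t-s)}$ produces exactly \eqref{eq:Ainv0}. Finally I would check consistency at $t=\tau$ (the first term reduces to $M(\tau)$ and the integral vanishes) and that the displayed formula indeed satisfies the ODE above, which pins it down uniquely.

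The computation is essentially routine; the only points requiring care are the non-commutativity bookkeeping in the Duhamel step — keeping the left and right $A^*$-exponentials and the scalar integrating factor in their correct positions — and justifying that $A(t)$ stays invertible for all finite $t$, which is what the preliminary $\Phi(t)$ argument is for and is the one place where the rank-deficient regime $m^*<d$ (where $A^*$ is only positive semidefinite and $A(t)\to A^*$ becomes singular as $t\to\infty$) could otherwise cause trouble.
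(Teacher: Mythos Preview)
Your proposal is correct. The paper takes the stochastic route you deliberately bypass: it writes down the explicit solution~\eqref{eq:sdesol} of the linear SDE~\eqref{eq:sde} for $\zb(t)$ in terms of $\zb(\tau)$ and a stochastic integral, then computes $A^{-1}(t)=\EE[\zb(t)\zb^T(t)]$ using $\EE[\zb(\tau)\zb^T(\tau)]=A^{-1}(\tau)$ and It\^o isometry. Your argument instead solves the deterministic Lyapunov equation for $M=A^{-1}$ (which is exactly~\eqref{eq:Ainveq} already derived in the proof of Lemma~\ref{lem:trace}) by variation of parameters. The two computations are really the same one viewed from opposite sides: the covariance of a linear SDE always satisfies the associated Lyapunov ODE, and the variation-of-constants formula for that ODE is precisely what It\^o isometry produces for the covariance. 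Your route is more self-contained and avoids stochastic calculus entirely; the paper's route exploits the representation it has just set up in Lemma~\ref{lem:trace}. Your extra care about invertibility of $A(t)$ via the fundamental matrix $\Phi(t)$ is a nice touch that the paper leaves implicit.
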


\begin{lemma}
  \label{lem:trace3}
  Under the conditions of Lemma~\ref{lem:trace}, we have
  \begin{equation}
  \label{eq:Ainvlim}
    \lim_{t\to\infty} \tr(A^* A^{-1}(t) A^*) = \tr A^*
  \end{equation}
\end{lemma}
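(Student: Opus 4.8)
The plan is to substitute the explicit representation of $A^{-1}(t)$ from Lemma~\ref{lem:trace2} into $\tr(A^*A^{-1}(t)A^*)$ and exploit that $A^*$ commutes with every factor built out of $A^*$. First note that if $A^*$ were invertible the claim would be immediate: by~\eqref{eq:convpop1} we have $A(t)\to A^*$, hence $A^{-1}(t)\to (A^*)^{-1}$ and $\tr(A^*A^{-1}(t)A^*)\to\tr A^*$. The content of the lemma is therefore the rank-deficient case $m^*<d$, in which the eigenvalues of $A(t)$ in the kernel of $A^*$ tend to $0$ so that $A^{-1}(t)$ blows up; sandwiching by $A^*$ is exactly what discards this divergence. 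A cheap lower bound is already available: writing $\tr A^* = \tr\big((A^{-1/2}A^*)^T A^{1/2}\big)$ and applying Cauchy--Schwarz in the Frobenius inner product gives $(\tr A^*)^2\le\tr(A^*A^{-1}A^*)\,\tr A$, so since $\tr A(t)\to\tr A^*$ one gets $\liminf_{t\to\infty}\tr(A^*A^{-1}(t)A^*)\ge\tr A^*$; the argument below recovers this together with the matching upper bound.

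Next I would diagonalize $A^* = \sum_{i=1}^{m^*}\lambda_i^*\,\vb_i^*(\vb_i^*)^T$ with $\lambda_1^*\ge\cdots\ge\lambda_{m^*}^*>0$. Multiplying the identity~\eqref{eq:Ainv0} on both sides by $A^*$, using that $A^*$ commutes with $e^{-cA^*}$, and taking the trace, one obtains for every fixed $\tau\ge0$ and all $t\ge\tau$, with $\rho(s,t):=\exp\big(2\int_s^t\tr(A(u)-A^*)\,du\big)$,
\[
  \tr(A^*A^{-1}(t)A^*) = \rho(\tau,t)\,\tr\!\big(e^{-4A^*(t-\tau)}A^*A^{-1}(\tau)A^*\big) + 4\sum_{i=1}^{m^*}(\lambda_i^*)^2\int_\tau^t e^{-4\lambda_i^*(t-s)}\rho(s,t)\,ds .
\]

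The heart of the proof is a $\delta$--$\tau$ estimate on $\rho$. Because $A(t)\to A^*$, given $\delta\in(0,2\lambda_{m^*}^*)$ I choose $\tau=\tau(\delta)$ so that $|\tr(A(u)-A^*)|<\delta$ for all $u\ge\tau$; then $e^{-2\delta(t-s)}\le\rho(s,t)\le e^{2\delta(t-s)}$ whenever $t\ge s\ge\tau$. For the first term, $\rho(\tau,t)\le e^{2\delta(t-\tau)}$ together with $\tr\!\big(e^{-4A^*(t-\tau)}A^*A^{-1}(\tau)A^*\big)\le e^{-4\lambda_{m^*}^*(t-\tau)}\,\tr\!\big(A^*A^{-1}(\tau)A^*\big)$ bounds it by $e^{-(4\lambda_{m^*}^*-2\delta)(t-\tau)}\,\tr(A^*A^{-1}(\tau)A^*)\to0$ as $t\to\infty$ (and it is nonnegative, being the trace of a product of PSD matrices). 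For the second term, the substitution $r=t-s$ and the two-sided bound on $\rho$ give
\[
  4\sum_{i=1}^{m^*}(\lambda_i^*)^2\!\int_0^{t-\tau}\! e^{-(4\lambda_i^*+2\delta)r}\,dr \ \le\ 4\sum_{i=1}^{m^*}(\lambda_i^*)^2\!\int_\tau^t\! e^{-4\lambda_i^*(t-s)}\rho(s,t)\,ds \ \le\ 4\sum_{i=1}^{m^*}(\lambda_i^*)^2\!\int_0^{t-\tau}\! e^{-(4\lambda_i^*-2\delta)r}\,dr ,
\]
and letting $t\to\infty$ yields $\sum_{i}\tfrac{(\lambda_i^*)^2}{\lambda_i^*+\delta/2}\le\liminf_{t\to\infty}\tr(A^*A^{-1}(t)A^*)\le\limsup_{t\to\infty}\tr(A^*A^{-1}(t)A^*)\le\sum_{i}\tfrac{(\lambda_i^*)^2}{\lambda_i^*-\delta/2}$. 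Since $\delta$ is arbitrary and both sides tend to $\sum_i\lambda_i^*=\tr A^*$ as $\delta\to0$, the limit exists and equals $\tr A^*$.

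The only genuine difficulty is the factor $\rho(\tau,t)$, which a priori could grow exponentially and wreck the estimate: the resolution is that $\tr(A(t)-A^*)$ becomes arbitrarily small simply because $A(t)\to A^*$ --- no integrability of $\tr(A-A^*)$ is needed, only that it vanishes --- so on the relevant window $\rho$ grows at most like $e^{2\delta(t-s)}$, which is dominated by the true exponential decay $e^{-4\lambda_i^*(t-s)}$ coming from the strict positivity of the eigenvalues of $A^*$ on its range. The remaining input, $A(t)\to A^*$, is exactly~\eqref{eq:convpop1}, established under the hypotheses in force.
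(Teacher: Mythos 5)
Your proof is correct and follows essentially the same route as the paper: it substitutes the representation of Lemma~\ref{lem:trace2} into $\tr(A^*A^{-1}(t)A^*)$, decomposes along the eigenbasis of $A^*$, and controls the factor $\exp\big(2\int_s^t\tr(A(u)-A^*)\,du\big)$ by picking $\tau$ beyond which $|\tr(A(u)-A^*)|$ is small, your $\delta$ playing exactly the role of the paper's $C(\tau)$. The only cosmetic difference is the matching lower bound, which you extract from the same two-sided estimate on $\rho$ (or from Cauchy--Schwarz with $\tr A(t)\to\tr A^*$), whereas the paper gets it from the separate positivity inequality $\tr\big((A(t)-A^*)A^{-1}(t)(A(t)-A^*)\big)\ge 0$; both are valid.
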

Note that, if $m^*\ge d$ and $A^*$ is invertible, since $\lim_{t\to\infty} A(t) = A^*$, we have $\lim_{t\to\infty} A^{-1}(t) = (A^*)^{-1}$ and \eqref{eq:Ainvlim} trivially  holds.
This equation also holds when $m^*<d$, i.e. when $A^*$ is rank deficient and not invertible,  if we assume that $A(0)=\text{Id}$ so that $A(t)$ remains diagonal at all times by Theorem~\ref{th:QGDpop_dynamical_evolution} since, using the notations of this theorem,  we then have
\begin{equation}
\label{eq:Ainvlim3}
    \lim_{t\to\infty} \tr(A^* A^{-1}(t) A^*) = \lim_{t\to\infty} \sum_{i=1}^{m^*} \frac{(\lambda_i^*)^2}{\lambda_i(t)} = \sum_{i=1}^{m^*} \lambda_i^* = \tr A^*
\end{equation}
because $\lim_{t\to\infty}\lambda_i(t)=\lambda_i^*>0$ if $i\le m^*$. However, the dangerous case is when $m^*<d$ and $A(t)$ is not diagonal: in that case \eqref{eq:Ainvlim}  is nontrivial.

\begin{proof}[Proof of Lemma~\ref{lem:trace}]
Since $A(0)$ has full rank, $A^{-1}(0)$ exists, and 
  since $A(t)$ solves \eqref{eq:QGDpop}, $A^{-1}(t)$ satisfies
  \begin{equation}
  \label{eq:Ainveq}
  \begin{aligned}
      \frac{d}{dt} A^{-1}(t) &= 2\left[(\tr(A-A^*) ) A^{-1} + A^{-1} (A-A^*) + 
        (A-A^*)A^{-1}\right]\\
        & = 2 (\tr(A-A^*) ) A^{-1} - 2 A^{-1} A^* -2 A^* A^{-1} + 4\, \text{Id}
    \end{aligned}
  \end{equation}
  A direct calculation with~\eqref{eq:sde} using It\^o formula shows that $\EE[\zb(t) \zb^T(t)]$ solves~\eqref{eq:Ainveq} for the same initial condition, i.e. \eqref{eq:repSDE} holds.
\end{proof}

\begin{proof}[Proof of Lemma~\ref{lem:trace2}]
 Equation~\eqref{eq:repSDE} implies that
  \begin{equation}
  \label{eq:equiv1}
      \tr(A^* A^{-1}(t) A^*) = \EE| A^* \zb(t)|^2
  \end{equation}
  Since the solution to~\eqref{eq:sde} can be expressed as
  \begin{equation}
  \label{eq:sdesol}
  \begin{aligned}
      \zb(t)  =  & \exp\left(-2A^*(t-\tau)+\textstyle{\int_\tau^t}\tr(A(s)-A^*)ds \right) \zb(\tau) \\
      & + 2  \int_\tau^t \exp\left(-2A^*(t-s)+\textstyle{\int_s^t}\tr(A(u)-A^*)du \right) d\Wb(s),
     \end{aligned}
  \end{equation}
  a direct calculation using this formula in~\eqref{eq:equiv1} together with $\EE [z(\tau) z^T(\tau)] = A^{-1}(\tau)$ and It\^o isometry establishes~\eqref{eq:Ainv0}.
  \end{proof}
  
\begin{proof}[Proof of Lemma~\ref{lem:trace3}]
We only need to consider the nontrivial case when $A^*$ is rank deficient, i.e. $m^*<d$. To begin, notice that~\eqref{eq:Ainv0} implies the following identity for all $t\ge\tau\ge0$:
\begin{equation}
  \label{eq:Ainv}
    \begin{aligned}
      \tr(A^* A^{-1}(t) A^*) &= 
      \tr \left(A^* e^{-2A^* (t-\tau)} A^{-1}(\tau) e^{-2A^* (t-\tau)} A^* \right) \exp\left({\textstyle 2\int_\tau^t} \tr(A(s)-A^*) ds\right)\\
    & + 4 \int_\tau^t \tr \left(A^* e^{-4A^* (t-s)} A^* \right) \exp\left({\textstyle 2\int_s^t} \tr(A(u)-A^*) du\right) ds
      \end{aligned}
  \end{equation}
Since $A^*$ is symmetric and positive semidefinite, its eigenvalues are nonnegative and there exists an orthonormal basis made of its eigenvectors. Denote this basis by $\{\vb^*_i\}_{i=1}^d$ and let us order it in way that the corresponding eigenvalues are $\lambda^*_i>0$ for $i=1,\ldots,m^*$, and $\lambda^*_i=0$ for $i=m^*+1,\ldots,d$. Then \eqref{eq:Ainv} can be written as
\begin{equation}
  \label{eq:Ainv2}
    \begin{aligned}
      \tr(A^* A^{-1}(t) A^*) &= 
      \sum_{i=1}^{m^*} (\lambda^*_i)^2(\vb_i^*)^TA^{-1}(\tau)\vb^*_i \exp\left( -4\lambda_i^*(t-\tau) + 2{\textstyle\int_\tau^t} \tr(A(s)-A^*) ds\right)\\
    & + 4 \sum_{i=1}^{m^*}(\lambda_i^*)^2\int_\tau^t \exp\left(-4\lambda^*_i (t-s)+ 2\textstyle{\int_s^t} \tr(A(u)-A^*) du\right) ds
      \end{aligned}
  \end{equation}
  Since $|\tr(A(t)-A^*)|$ is bounded for all $t\ge0$, evaluating this expression at $\tau=0$ shows that $\tr(A^* A^{-1}(t) A^*)$ is also bounded i.e. we only need to consider what happens as $t\to\infty$. We  have
  \begin{equation}
  \label{eq:limrhs0}
  \quad \forall t\ge \tau \quad : \quad 
  \left| \frac{\int_\tau^t  \tr(A(u)-A^*) du}{2(t-\tau)}\right|\le C(\tau) := \frac12\max_{u\in[\tau,\infty)} |\tr(A(u)-A^*)|<\infty
  \end{equation}
  with $C(\tau)$ decaying to zero as $\tau\to\infty$ since $\lim_{t\to\infty} A(t) = A^*$ by~\eqref{eq:convpop1}. 
  This implies that the first term at the right hand side of~\eqref{eq:Ainv2} can be bounded as
  \begin{equation}
  \label{eq:limrhs11}
  \begin{aligned}
       &\sum_{i=1}^{m^*} (\lambda^*_i)^2(\vb_i^*)^TA^{-1}(\tau)\vb^*_i\exp\left( -4\lambda_i^*(t-\tau) + 2{\textstyle\int_\tau^t} \tr(A(s)-A^*) ds\right) \\
       &\le \sum_{i=1}^{m^*} (\lambda^*_i)^2(\vb_i^*)^TA^{-1}(\tau)\vb^*_i \exp\left( -4\lambda_i^*(t-\tau) [1- C(\tau)/\lambda_i^*]\right).
  \end{aligned}
  \end{equation}
  Since $\lim_{\tau\to\infty} C(\tau)=0$, there exists $\tau_c\ge0$ such that 
  $C(\tau)<\min_{i=1,\ldots,m^*} \lambda_i^*$ for all $\tau\ge\tau_c$, and hence $1- C(\tau)/\lambda_i^*>0$ for all $\tau\ge\tau_c$ and for all $i=1,\ldots,m^*$. Therefore we can let $t\to\infty$ at any fixed  $\tau\ge \tau_c$ in \eqref{eq:limrhs11}  to conclude that the limit of the first term at the right hand side of \eqref{eq:Ainv2} is zero, i.e.
  \begin{equation}
  \label{eq:limrhs1}
  \begin{aligned}
       &\lim_{t\to\infty} \sum_{i=1}^{m^*} (\lambda^*_i)^2(\vb_i^*)^TA^{-1}(\tau)\vb^*_i\exp\left( -4\lambda_i^*(t-\tau) + 2{\textstyle\int_{\tau}^t} \tr(A(s)-A^*) ds\right) =0 \qquad (\tau\ge\tau_c).
  \end{aligned}
  \end{equation}
  Similarly, to deal with the second term at the right hand side of \eqref{eq:Ainv2}, we can use
  \begin{equation}
  \label{eq:limrhs2}
  \quad \forall t\ge s\ge \tau \quad : \quad 
  \left| \frac{\int_s^t  \tr(A(u)-A^*) du}{2(t-s)}\right|\le C(\tau) 
  \end{equation}
  with the same $C(\tau)$ as in~\eqref{eq:limrhs0}.
  As a result, by taking again $\tau\ge\tau_c$,
  we have
  \begin{equation}
  \begin{aligned}
      & \lim_{t\to\infty} 4\sum_{i=1}^{m^*}(\lambda_i^*)^2\int_\tau^t \exp\left(-4\lambda^*_i (t-s)+ 2\textstyle{\int_s^t} \tr(A(u)-A^*) du\right) ds \\
      & \le 4\sum_{i=1}^{m^*} (\lambda_i^*)^2 \lim_{t\to\infty} \int_\tau^t \exp\left(-4\lambda^*_i (t-s) [1-C(\tau)/\lambda_i^*]\right)ds\\
      & =  \sum_{i=1}^{m^*}\lambda_i^* [ 1-C(\tau)/\lambda_i^*]^{-1} \qquad (\tau\ge\tau_c)
      \end{aligned} 
  \end{equation}
  Therefore we have established that
  \begin{equation}
  \label{eq:Ainv3}
      \lim_{t\to\infty} \tr(A^* A^{-1}(t) A^*) \le \sum_{i=1}^{m^*}\lambda_i^* [ 1-C(\tau)/\lambda_i^*]^{-1}\qquad  (\tau\ge\tau_c)
  \end{equation}
  Since this equation is valid for any $\tau\ge\tau_c$ and $\lim_{\tau\to\infty} C(\tau)=0$, we can now let $\tau\to\infty$ on the right hand side of~\eqref{eq:Ainv3} to deduce
  \begin{equation}
  \label{eq:upperb}
      \lim_{t\to\infty} \tr(A^* A^{-1}(t) A^*) \le \sum_{i=1}^{m^*}\lambda_i^* =\tr A^*
  \end{equation}
  To get the matching lower bound, use $\tr\left((A(t)-A^*)
      A^{-1}(t)(A(t)-A^*)\right)\ge 0$ to deduce
  \begin{equation}
  \tr(A^* A^{-1}(t) A^*) \ge 2\tr A^* - \tr A(t)
  \end{equation}
  and take the limit as $t\to\infty$ using $\lim_{t\to\infty}\tr A(t) = A^*$ to obtain 
  \begin{equation}
  \label{eq:lowerb}
      \lim_{t\to\infty} \tr(A^* A^{-1}(t) A^*) \ge \tr A^*
  \end{equation}
  Taken together \eqref{eq:upperb} and \eqref{eq:lowerb} imply \eqref{eq:Ainvlim}.
\end{proof}

We can now use these results to proceed with the rest of the proof of Theorem~\ref{th:convrate}:

\begin{proof}[Proof of~\eqref{eq:Eboundnonas} and~\eqref{eq:Eboundas} in Theorem~\ref{th:convrate}]
 The multiplicative inverse of $E(A(t))$ satisfies
  \begin{equation}
  \label{eq:Erecip}
    \frac{d}{dt} E^{-1}(A(t)) = 2 E^{-2}(A(t)) \tr[\nabla E
    (A(t)) A(t)\
    \nabla E (A(t))].
  \end{equation}
  By convexity of $E(A)$ we have
  \begin{equation}
  E(A(t)) \le \tr[(A(t)-A^*)\nabla E(A(t))] =
  \tr[(A(t)-A^*) A^{-1/2}(t) A^{1/2}(t) \nabla E(A(t))] 
\end{equation}
where we used the positivity of $A(t)$ as well as $E(A^*)=0$. Therefore
using Cauchy-Schwarz inequality we obtain
\begin{equation}
  E^2(A(t))  \le \tr[\nabla E
  (A(t)) A(t)\
  \nabla  E (A(t))] \tr\left[(A^*(t)-A^*)A^{-1}(t)
    (A(t)-A^*)\right].
\end{equation}
Using this inequality in~\eqref{eq:Erecip}  we deduce
\begin{equation*}
  \frac{d}{dt}E^{-1}(A(t))\ge 2 \left[\tr\left((A(t)-A^*)A^{-1}(t)
    (A(t)-A^*)\right)\right]^{-1}
\end{equation*}
Integrating and reorganizing gives
\begin{equation}
\label{eq:Ebound}
  E(A(t)) \le \frac{E(A(0))}{1+2
    E(A(0))\int_0^t \left[\tr\left((A(s)-A^*)
      A^{-1}(s)(A(s)-A^*)\right)\right]^{-1} ds}
\end{equation}
To proceed let us analyze the behavior of the integral in the denominator. Start by noticing that
\begin{equation}
\label{eq:limexists}
\begin{aligned}
    &\lim_{t\to\infty} \tr\left((A(t)-A^*)
      A^{-1}(t)(A(t)-A^*)\right) \\
      & = \lim_{t\to\infty}\tr A(t) - 2 \tr A^* + \lim_{t\to\infty}\tr (A^*A^{-1}(t) A^*)=0
      \end{aligned}
\end{equation}
where we used $\lim_{t\to\infty}\tr A(t)=\tr A^*$ as well as \eqref{eq:Ainvlim} in Lemma~\ref{lem:trace2}. \eqref{eq:limexists} guarantees that $\tr\left[(A(t)-A^*)
 A^{-1}(t)(A(t)-A^*)\right] $ is bounded for all time, i.e.
 \begin{equation}
    \forall t\ge0 \quad : \quad \left[\tr\left((A(t)-A^*)
 A^{-1}(t)(A(t)-A^*)\right)\right]^{-1} \ge C >0
\end{equation}
with 
\begin{equation}
    C = \left[\max_{t\in[0,\infty)} \tr\left[(A(t)-A^*)
 A^{-1}(t)(A(t)-A^*)\right]\right]^{-1} 
\end{equation}
As a result
\begin{equation}
 \forall t\ge0 \quad : \quad \int_0^t \tr\left[(A(s)-A^*)
 A^{-1}(s)(A(s)-A^*)\right]^{-1} ds \ge Ct
\end{equation}
which from~\eqref{eq:Ebound} implies the nonasymptotic bound in~\eqref{eq:Eboundnonas}. To establish the asymptotic bound in~\eqref{eq:Eboundas}, note that~\eqref{eq:limexists} implies that 
\begin{equation}
 \int_0^t \left[\tr\left((A(s)-A^*)
 A^{-1}(s)(A(s)-A^*)\right)\right]^{-1} ds \ \  \text{grows faster than $t$ as $t\to\infty$}
\end{equation}
Using this result in~\eqref{eq:Ebound} implies that $E(A(t))$ decays faster than $1/t$ as $t\to\infty$, i.e. $E(A(t))=o(1/t)$ and \eqref{eq:Eboundas} holds.
\end{proof}

\subsection{Proof of Theorem~\ref{th:QGDpop_dynamical_evolution}}
\label{si:popdiag}

Since $A^*$ is symmetric and positive semidefinite, its eigenvalues are nonnegative and there exists an orthonormal basis made of its eigenvectors. Denote this basis by $\{\vb^*_i\}_{i=1}^d$ and let us order it in way that the corresponding eigenvalues are $\lambda^*_i>0$ for $i=1,\ldots,m^*$, and $\lambda^*_i=0$ for $i=m^*+1,\ldots,d$. Denote by $U^*$ the orthogonal matrix whose columns are the eigenvectors of $A^*$, so that $A^* = U^* \Lambda^* (U^*)^T$ with $\Lambda^* = \text{diag}(\lambda^*_1, \ldots, \lambda^*_d)$. Let $\Lambda(t) = (U^*)^T A(t) U^*$. Since $A(0) = \text{Id}$ by assumption, $\Lambda(0) = \text{Id}$ and from~\eqref{eq:QGDpop} this matrix evolves according to
\begin{equation}
    \label{eq:Lambda}
    \begin{aligned}
    \dot\Lambda & = 2 (\tr(A^*-A)) (U^*)^T A U^* + 2(U^*)^T (A^*-A) A  U^*+ 2(U^*)^T A (A^*-A)   U^*\\
    & = 2 (\tr(\Lambda^*-\Lambda )) \Lambda + 2\Lambda (\Lambda^*-\Lambda)+ 2 (\Lambda^*-\Lambda) \Lambda.
    \end{aligned}
\end{equation}
This equation shows that $\Lambda(t)$ remains diagonal for all times, $\Lambda(t) = \text{diag}(\lambda_1(t), \ldots, \lambda_d(t))$. Written componentwise \eqref{eq:Lambda} is \eqref{eq:Lambdacomp}. \hfill$\square$

\section{Additional Results}\label{si:numerical}

\subsection{Supporting numerical results to Fig.~\ref{fig:dyn_phases}}

\begin{figure}[H]
	\centering
	\includegraphics[width=\linewidth]{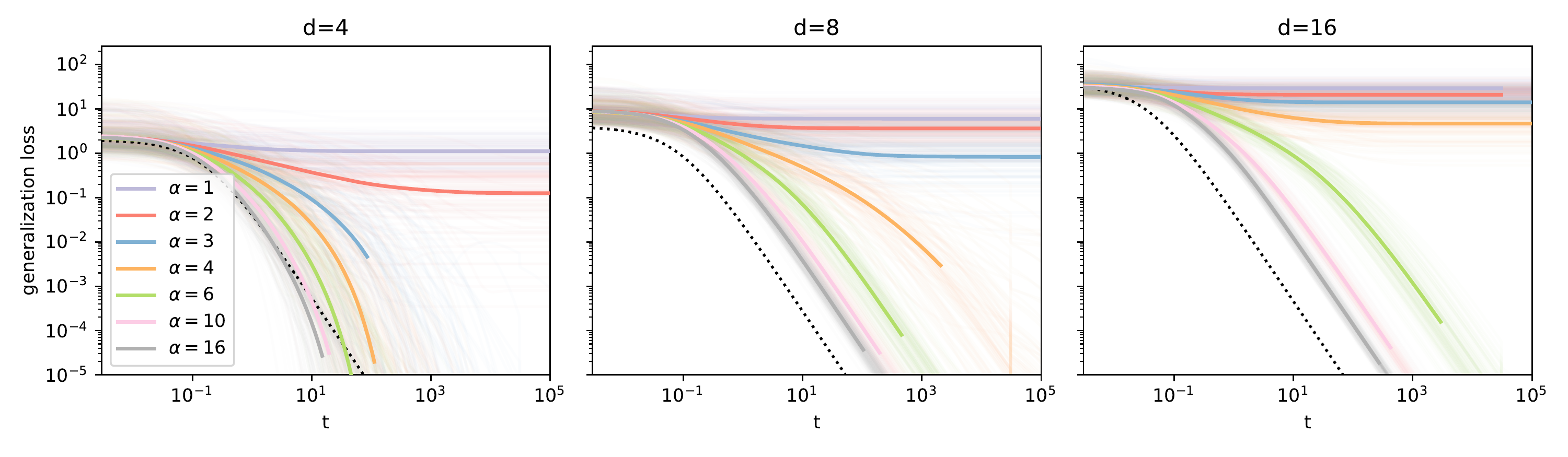}
	\caption{Population loss for $d=4,8,16$ and $m^*=4$ and several values of $\alpha=n/d$. The line shown with full color are average of the logarithm of 100 simulations (300 for $d=4$) and the individual instances are shown in transparency.}
	\label{fig:loss_g_comparison_d_nT4}
\end{figure}

In Fig.~\ref{fig:loss_g_comparison_d_nT4} shows the average performance of GD with $n=\alpha d$ datapoints and a teacher with $m^*$ and Gaussian hidden units. The figure is intended to show a vertical cut in the dynamical phases Fig.~\ref{fig:dyn_phases}. Moving up in $d$ at $m^*$ fixed we observe that on average  the simulations  converge when $\alpha>\alpha_c$ and they do not when $\alpha>\alpha_c$, i.e. there is an abrupt change of behavior when we cross the transition. Another interesting aspect of the figure is that the first panel has $m^*\ge d$ which leads to and exponential (rather than quadratic) convergence rate in the loss, consistent to our analysis. The dotted line is a reference line that represents the $1/t^2$  decay of the loss.

\subsection{Supporting numerical results to Theorem~\ref{th:strict_convexity_loss}}

\begin{figure}[H]
	\centering
	\includegraphics[width=.49\linewidth]{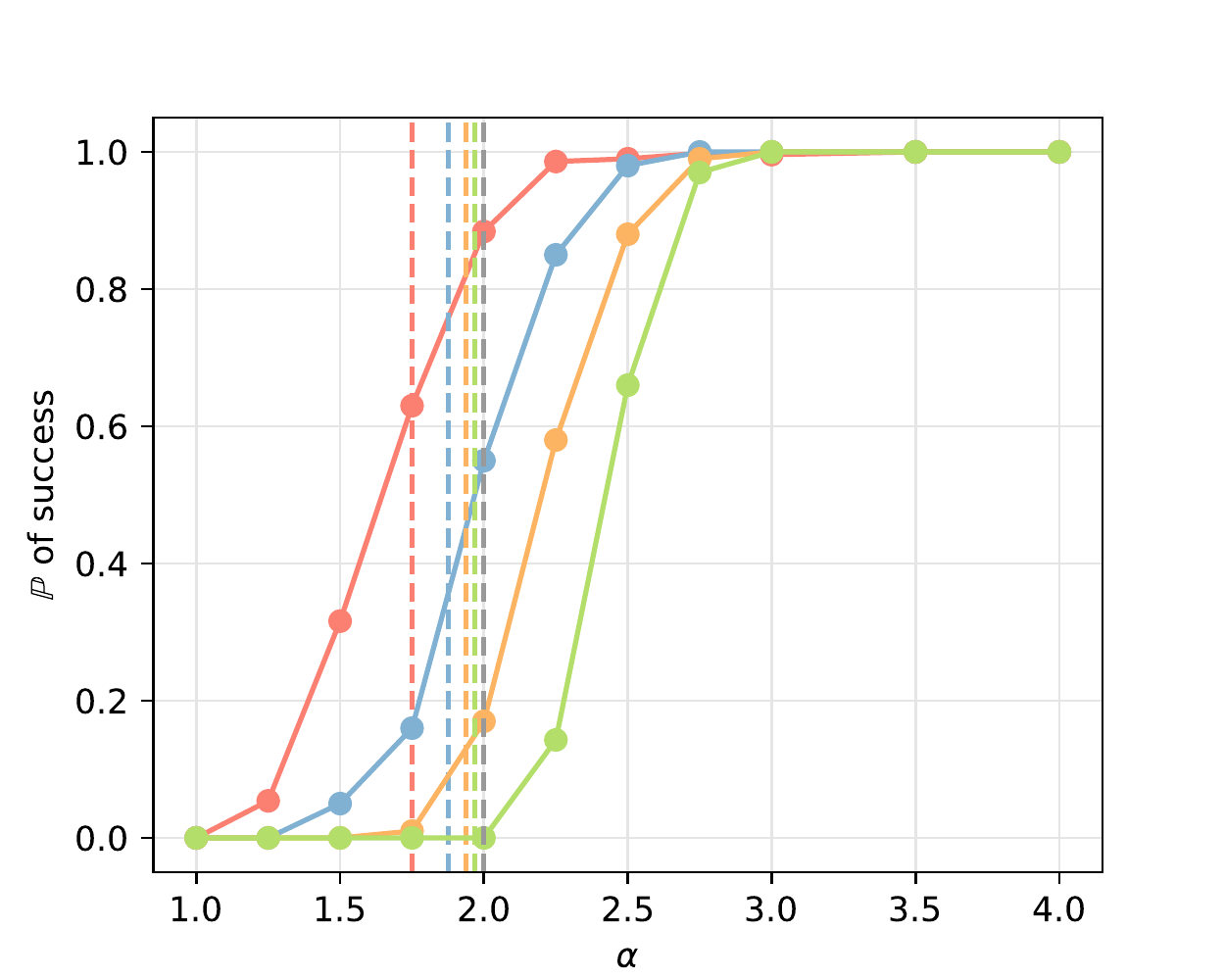}
	\includegraphics[width=.49\linewidth]{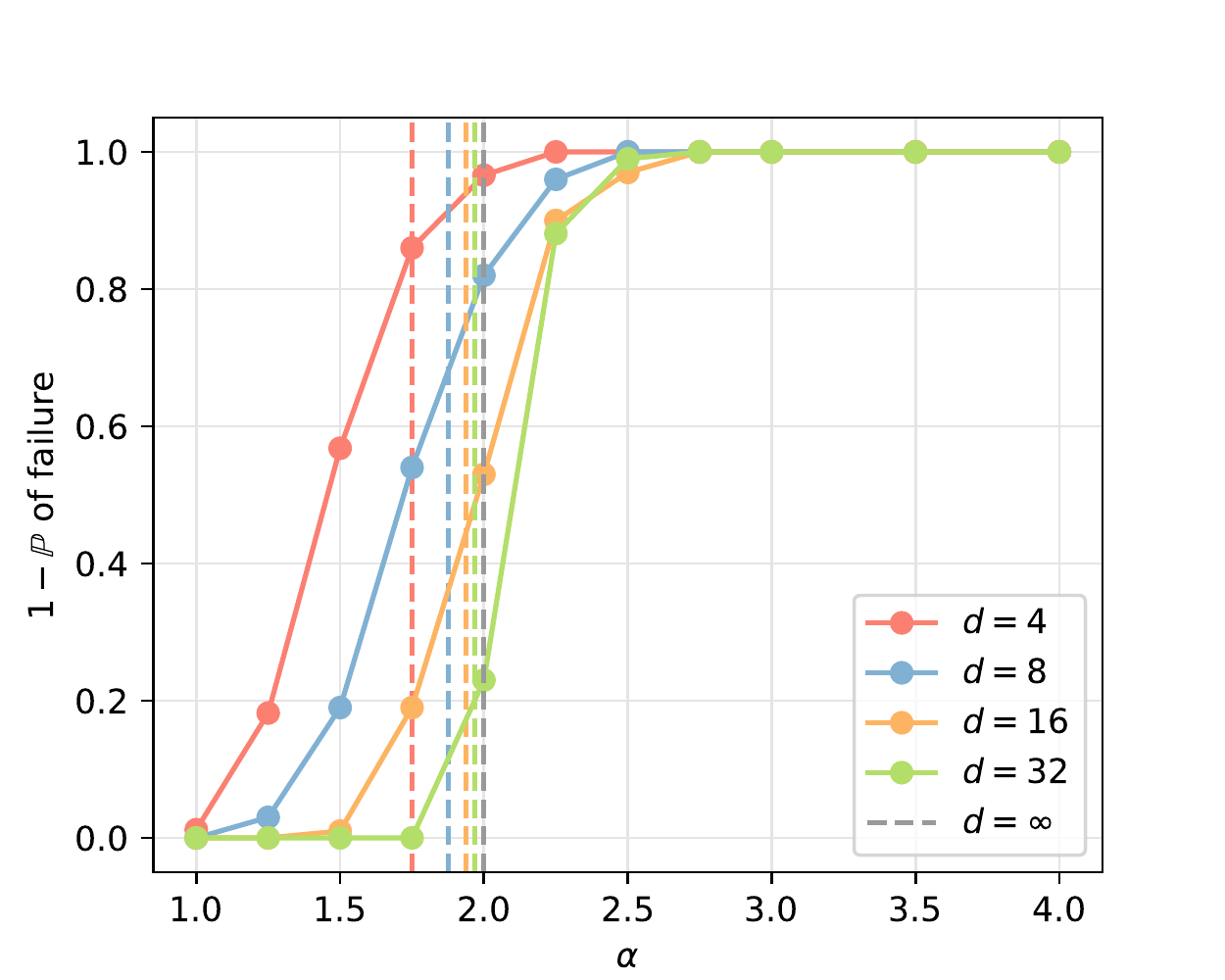}
	\caption{Left panel: fraction of simulations that went below $10^{-5}$ for $d=4,8,16,32$. Right panel: complement of the fraction of simulations that have a ratio between final generalization loss and training loss that is larger then $10^9 d$.}
	\label{fig:success_failure}
\end{figure}

In Fig.~\ref{fig:success_failure} we present a numerical verification of Theorem~\ref{th:strict_convexity_loss}. According to the theorem, as $d\to\infty$ with $m^*=1$ (so that $\alpha_c=2$) the probability of finding the teacher should converge to zero for $\alpha<2$ and to  positive values for $\alpha >2$. The left panel on the figure shows the fraction of 100 simulations that achieved at least $10^{-5}$ generalization loss after $2\log_2d\times10^7$ iterations with learning rate $0.003$. The right panel shows the number of simulations for which the ratio between training and generalization losses is larger than $10^{-9} d^{-1}$. This second panel is meant to capture  the simulations for which we expect convergence eventually, but the number of iterations was not enough to achieve it. In particular, we observed that when generalization fails, meaning that the training loss goes to zero and the generalization loss stay at a high value, the convergence rate of the training loss is exponential, contrarily to simulation where the generalization loss eventually goes to zero that have a $O(1/t^2)$ convergence rate. Using simulations with $10^8$ iterations is sufficient to detect the difference between the two cases and therefore this gives us a good criterion to distinguish between successful and unsuccessful simulations.

\begin{figure}[H]
	\centering
	\includegraphics[width=\linewidth]{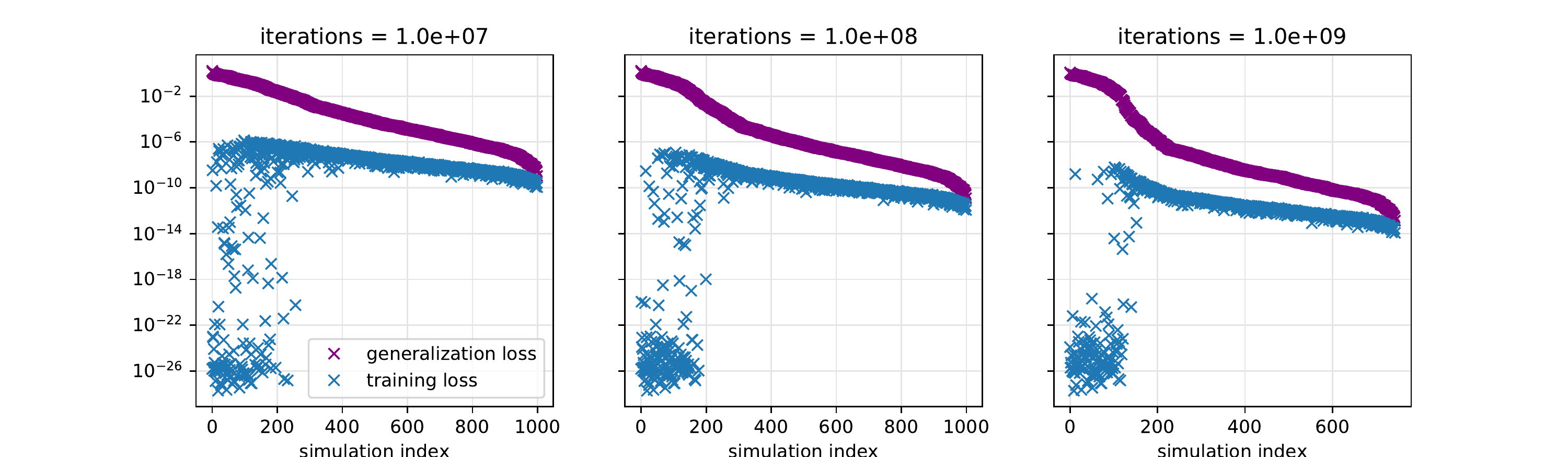}
	\caption{Final value of the training and generalization loss of several simulations with input $d=4$ and $n=7$ samples in the dataset. From left to right the maximum number of steps in the simulation increases by a factor 10.}
	\label{fig:final_values}
\end{figure}

To provide more evidence  of this reasoning, in Fig.~\ref{fig:final_values}  we show training and generalization loss of $1000$ simulations for $d=4$, $n=2d-1$ and $m^*=1$. We order the simulations according to the loss and show in the three panels three snapshots for different number of iterations. From left to right the number of iterations increases by a factor 10 in each panel. As can be seen, the ratio between generalization loss and training loss at the end of the training is a valid measure of success. 


\subsection{Extrapolation procedure}\label{sec:simulations_extrapolation}

\begin{figure}[H]
	\centering
	\includegraphics[width=.5\linewidth]{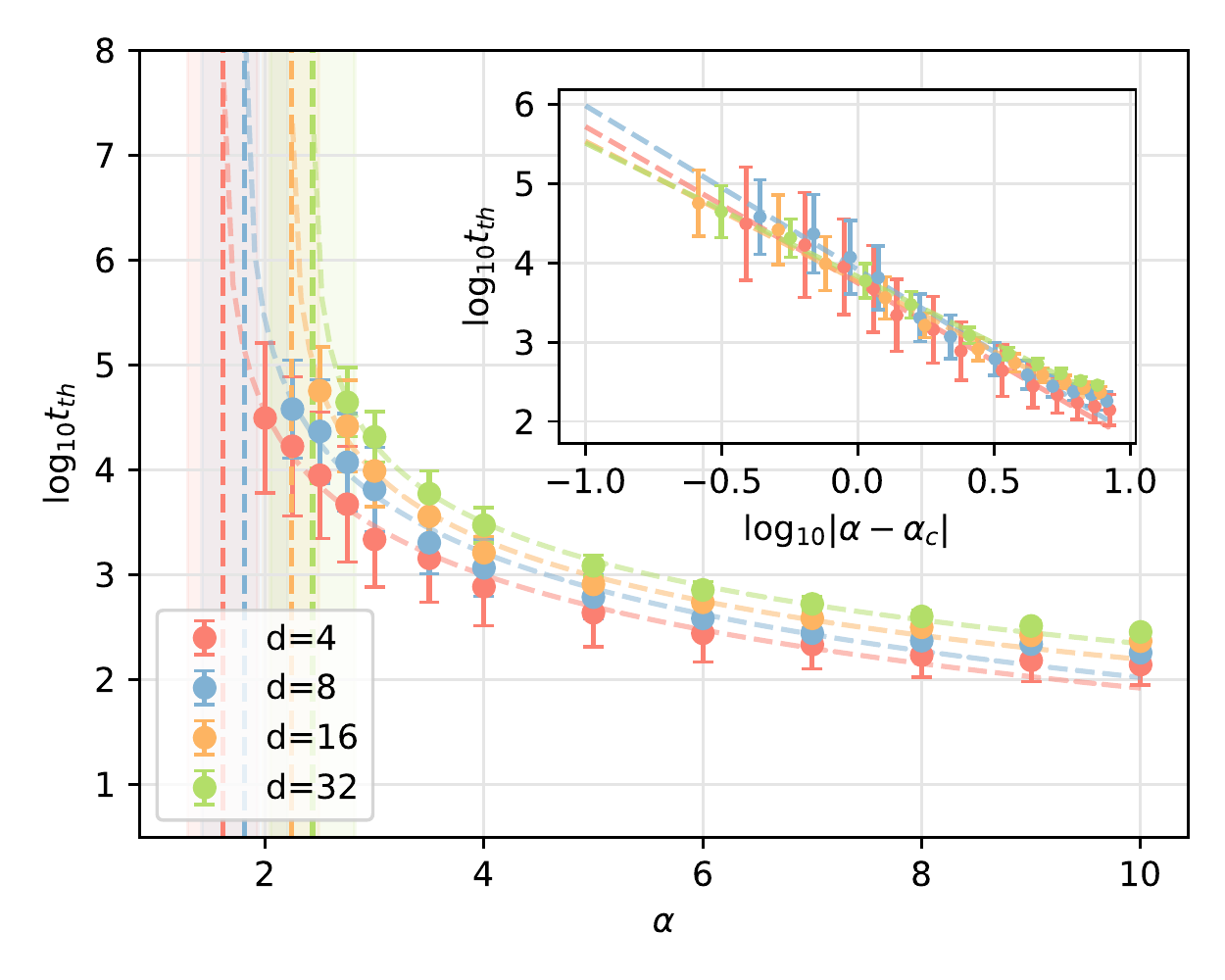}
	\caption{Extrapolation of the sample complexity threshold $\alpha_c=2$ for $m^*=1$ assuming a power-law increase of the time to converge to a $10^{-5}$ value of the loss when approaching this threshold.  In the inset we show that the points lie on a line in log-log scale.}
	\label{fig:extrapolation_alpha_c}
\end{figure}

We estimate the critical value of $\alpha$ numerically by fixing a threshold in the population loss, $10^{-5}$, and simulate the problem for a large set of $\alpha$. Starting from the largest value in the set, as $\alpha$ approaches the critical value the time needed to pass the threshold increase as a power-law $\sim|\alpha-\alpha_c|^{-\theta}$ \cite{cugliandolo1993analytical}. In Fig.~\ref{fig:extrapolation_alpha_c} we fit the relaxation times to cross a threshold in the population loss of $10^{-5}$ for $d=4,8,16,32$ and $m^*=1$. The extrapolated thresholds $\alpha_c$ and their 95\% confidence intervals are: for $d=4$, $\alpha_c=1.6$ $(1.3, 1.9)$; for $d=8$, $\alpha_c=1.8$ $(1.4, 2.2)$; for $d=16$, $\alpha_c=2.2$ $(2.0, 2.5)$; and for $d=32$, $\alpha_c=2.4$ $(2.0, 2.8)$. Close to the threshold $\alpha_c=2-1/d$, namely $1.8$, $1.9$, $1.9$, and $2.0$, as expected. The larger the input dimension, the larger the time to pass the threshold is, and as result the smallest accessible value of $\alpha$  also increases. This causes a decrease in accuracy on the threshold value, measured by the larger confidence intervals obtained assuming a t-student distribution. The same procedure has been applied for other values of $m^*$ to obtain the points shown in Fig.~\ref{fig:dyn_phases}.

\subsection{GD in the populations loss with orthogonal teacher}

\begin{figure}[H]
	\centering
	\includegraphics[width=.49\linewidth]{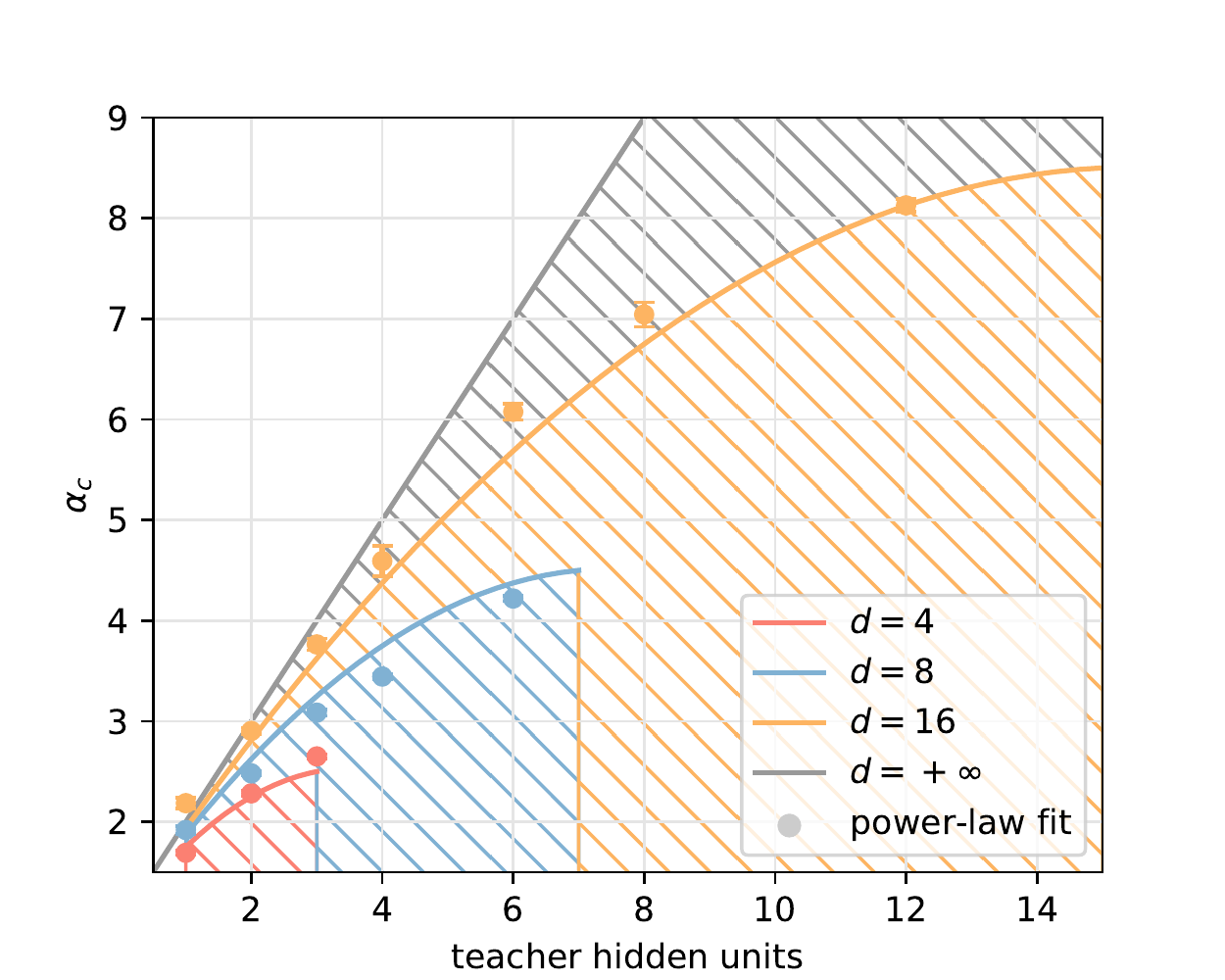}
	\caption{Same as Fig.~\ref{fig:dyn_phases} in the main text but for a  teacher with orthonormal hidden nodes. In that case,  as soon as $m^*$ becomes equal to or larger than $d$, $A^*=\text{Id}$, and therefore the student equal the teacher at initialization since $A(0) = \text{Id}$. }
	\label{fig:dyn_phases_plus}
\end{figure}

A simple special case of \eqref{eq:Lambdacomp} in Theorem~\ref{th:QGDpop_dynamical_evolution} is when the teacher has orthogonal hidden weights, so that $\lambda^*_j=1$ for  $1\le j\le m^*$ and $\lambda^*_j=0$ for every $m^*<j<d$. (Note that the problem becomes trivial in that case when $m^*=d$ since $A(t=0)=\text{Id}=A^*$.)  In that case the first $m^*$  \textit{informative eigenvalues} are the same, $\lambda_i = \lambda $ for $1\le 1\le d-m^*$, and~\eqref{eq:eigenvalue_evolution_asymptotic_leada}-\eqref{eq:eigenvalue_evolution_asymptotic_meanaa} reduce to
\begin{align}
    \label{eq:LV_eqs_lambda}
    & \frac{d}{dt}\log\lambda = (2m^*+4)(1-\lambda) - 2(d-m^*)\epsilon,
    \\
    \label{eq:LV_eqs_epsilon}
    & \frac{d}{dt}\log\epsilon = 2m^*(1-\lambda) - 2(2+d-m^*)\epsilon.
\end{align}
Those equations are an instance of \textit{Lotka-Volterra equations} that have a long history for modeling competing species  in ecology~\cite{lotka2002contribution,volterra1927variazioni}---here,  the informative $\lambda$ and noninformative $\epsilon$  eigenvalues play the role of these species. \eqref{eq:LV_eqs_lambda}-\eqref{eq:LV_eqs_epsilon} have three fixed points in the $(\lambda,\epsilon)$ space: the unstable solutions $(0,0)$ and $(0,m^*/(2+d-m^*))$, and the stable solution $(1,0)$. The phase portrait of these equation is shown in Fig.~\ref{fig:pp}.

\begin{figure}
	\centering
	\includegraphics[scale=.3]{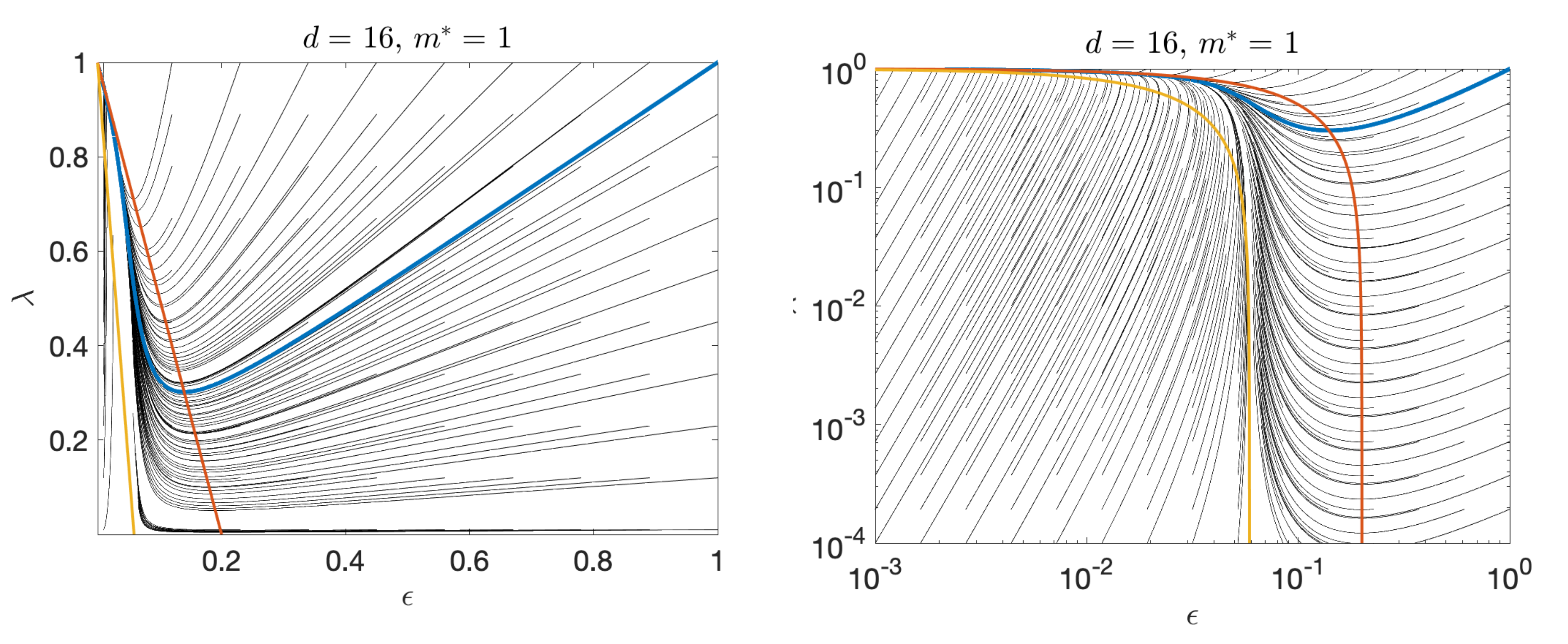}
	\caption{Phase portrait of the the Lotka-Volterra system in \eqref{eq:LV_eqs_lambda}-\eqref{eq:LV_eqs_epsilon} both in linear (left panel) and log (right panel) scales, for $d=16$ and $m^*=1$. The $\lambda$- and $\epsilon$-nullclines are shown in red and orange, respectively. The flow map is show in black. The actual solution starting from $(\lambda(0),\epsilon(0))=(1,1)$ is shown in blue. 
	}
	\label{fig:pp}
\end{figure}

Let us analyze \eqref{eq:LV_eqs_lambda}-\eqref{eq:LV_eqs_epsilon}  when $d-m^* \gg1$. In that case the dynamics of $\lambda$ and $\epsilon$ has two regimes: Initially  the second term at the right hand side of these equation is the dominant term; since this term is negative, it means that both $\lambda$ and $\epsilon$ decrease from their initial values $(\lambda(0),\epsilon(0)) = (1,1)$. In the second regime, $\epsilon$ becomes small enough that the right hand side of \eqref{eq:LV_eqs_lambda} becomes positive allowing $\lambda$ to bounce back up and grow towards its asymptotic value $\lim_{t\to\infty} \lambda(t) = \lambda^* = 1$ whereas $\epsilon$ continues to decreases so that  $\lim_{t\to\infty} \epsilon(t) = 0$ converges to zero with a linear convergence rate. If we neglect the first term at the right hand side of \eqref{eq:LV_eqs_epsilon}, this equation can be solved exactly:
\begin{equation}
    \label{eq:LV_eqs_epsilon_solON}
    \epsilon(t) \approx \frac1{1+2(2+d-m^*)t} 
\end{equation}
It turns out that this approximation is accurate in both regimes, because the first term at the right hand side of \eqref{eq:LV_eqs_epsilon} is always sub-dominant.  In the first regime, \eqref{eq:LV_eqs_lambda}-\eqref{eq:LV_eqs_epsilon} implies that $\lambda(t)\approx\epsilon(t)$, and this goes on until the right hand side of \eqref{eq:LV_eqs_lambda} changes sign, indicating the start of the second regime.
This occurs at time 
\begin{equation}
    t_0\approx\frac{d-m^*}{2(2+m^*)(2+d-m^*)}=O(1),
\end{equation} 
similarly to the random Gaussian case discussed in the main text. Observe that at that time, we have $\lambda(t_0)=\epsilon(t_0)=(m^*+2)/(d+2)$, and passed that time $\lambda(t)$ starts to increase again, while $\epsilon(t)$ keeps decreasing. Therefore in this second regime we can neglect the last term at the right hand  of \eqref{eq:LV_eqs_lambda} and solve this equation with the initial condition $\lambda(t_0)=\lambda_0=(m^*+2)/(d+2)$. This gives the logistic growth
\begin{equation}
    \label{eq:LV_eqs_lambda_solON}
    \lambda(t) \approx \frac{\lambda_0 e^{2(m^*+2)(t-t_0)}}{\lambda_0 (e^{2(m^*+2)(t-t_0)}-1) + 1}.
\end{equation}
From this equation, the time for $\lambda$ to reach its target $\lambda^*=1$ is approximately
\begin{equation}
    t_J\approx\frac1{m^*+2}\log\frac{d+2}{2(m^*+2)}.
\end{equation} 
These approximations are remarkably accurate as we can observe in Fig.~\ref{fig:approximate_eig}, where we evaluate numerically the dynamics on the population loss~\eqref{eq:QGDpop} and compare the result with the approximation for $d=512$, $m^*=1$ and $n=2048$. The left panel shows the evolution of the eigenvalues and the right one the generalization loss. The dotted line on the left is~\eqref{eq:LV_eqs_epsilon_solON} and on the right is~\eqref{eq:loss_alpha2} shown in the main text.

\begin{figure}[H]
	\centering
	\includegraphics[scale=.5]{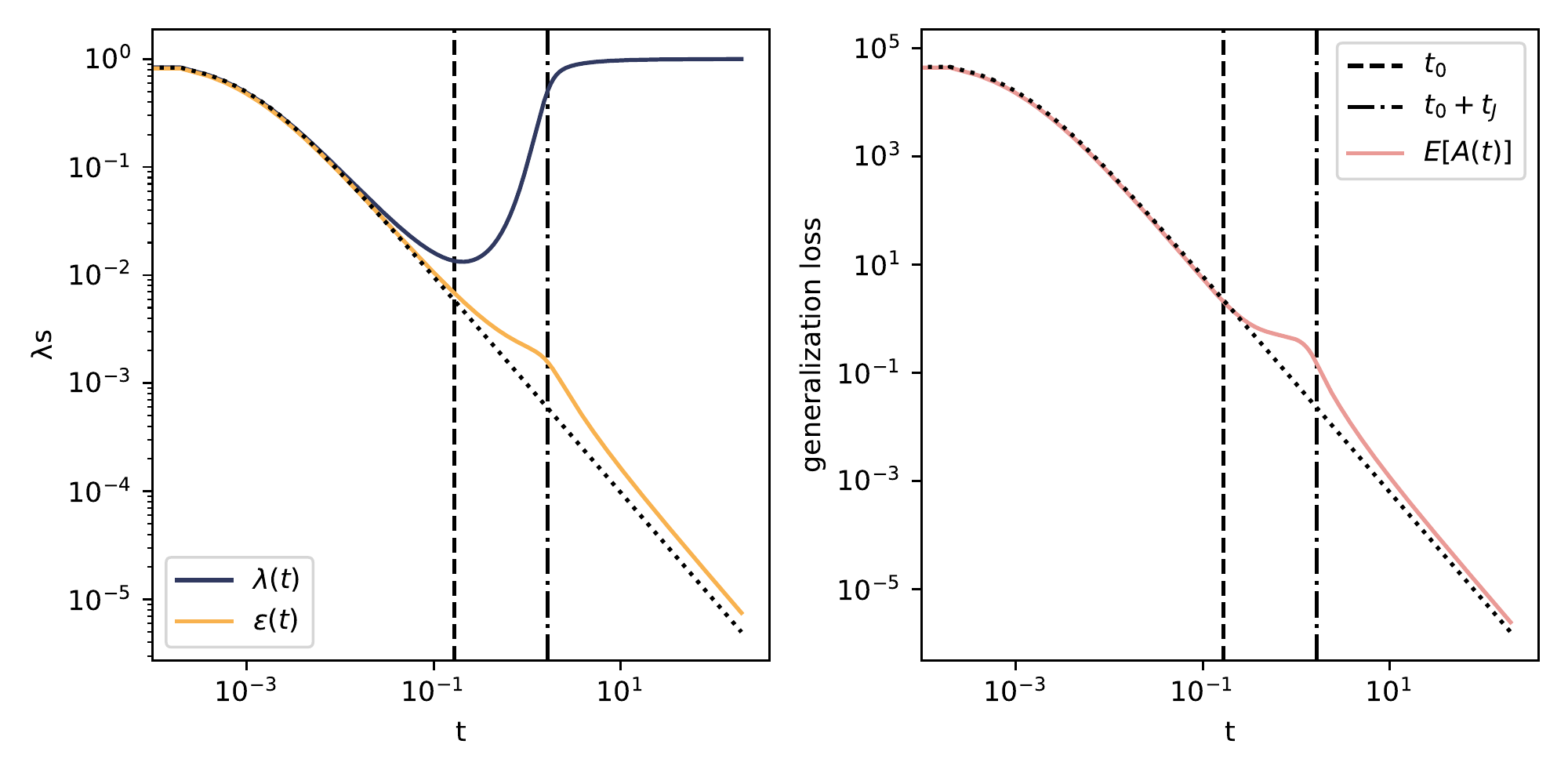}
	\caption{Evolution of the eigenvalues in the population loss (left) and generalization loss (right). Left panel: the solutions to \eqref{eq:LV_eqs_lambda}-\eqref{eq:LV_eqs_epsilon} and the approximate solution~\eqref{eq:LV_eqs_lambda_solON} (dotted line). Right panel: exact loss from~\eqref{eq:lpop_eigen} compared to its approximation in~\eqref{eq:loss_alpha_infty} (dotted line) valid for small and large times. The vertical lines show the two times $t_0$ and $t_0+t_J$. 
	}
	\label{fig:approximate_eig}
\end{figure}

\end{document}